\def\eqref#1{equation~\ref{#1}}
\def\1{\bm{1}}
\def\vu{{\bm{u}}}
\def\vx{{\bm{x}}}
\def\vy{{\bm{y}}}
\def\vz{{\bm{z}}}
\DeclareMathAlphabet{\mathsfit}{\encodingdefault}{\sfdefault}{m}{sl}
\SetMathAlphabet{\mathsfit}{bold}{\encodingdefault}{\sfdefault}{bx}{n}
\DeclareMathOperator*{\argmax}{arg\,max}
\theoremstyle{definition}
\newtheorem{definition}{Definition}[section]
\theoremstyle{remark}
\theoremstyle{lemma}
\newtheorem{lemma}{Lemma}[section]
\theoremstyle{proposition}
\newtheorem{proposition}{Proposition}[section]
\theoremstyle{theorem}
\newtheorem{theorem}{Theorem}[section]
\theoremstyle{corollary}
\newtheorem{corollary}{Corollary}[section]
\begin{document}

%

%
\runningauthor{KXY Technologies, Inc.}

\twocolumn[

\aistatstitle{LeanML: A Design Pattern To Slash Avoidable Wastes in Machine Learning Projects}

\aistatsauthor{Yves-Laurent Kom Samo}

\aistatsaddress{KXY Technologies, Inc. \\
\faEnvelope \, \texttt{yl@kxy.ai} \faMedium \,  \texttt{@Dr\_YLKS}  \faTwitter  \, \texttt{@Dr\_YLKS} \\
1762 Technology Dr. \\ San Jose, CA 95134, USA} ]

\begin{abstract}
We introduce the first application of the lean methodology to machine learning projects. Similar to lean startups and lean manufacturing, we argue that lean machine learning (LeanML) can drastically slash avoidable wastes in commercial machine learning projects, reduce the business risk in investing in machine learning capabilities and, in so doing, further democratize access to machine learning. The lean design pattern we propose in this paper is based on two realizations. First, it is possible to estimate the best performance one may achieve when predicting an outcome $\bm{y} \in \mathcal{Y}$ using a given set of explanatory variables $\bm{x} \in \mathcal{X}$, for a wide range of performance metrics, and without training any predictive model. Second, doing so is considerably easier, faster, and cheaper than learning the best predictive model. We derive formulae expressing the best $R^2$, MSE, classification accuracy and log-likelihood per observation achievable when using $\bm{x}$ to predict $\bm{y}$ as a function of the mutual information $I\left(\bm{y}; \bm{x}\right)$, and possibly a measure of the variability of $\bm{y}$ (e.g. its Shannon entropy in the case of classification accuracy, and its variance in the case regression MSE). We illustrate the efficacy of the LeanML design pattern on a wide range of regression and classification problems, synthetic and real-life.
\end{abstract}
\section{Introduction}
It is estimated that $25\%$ of commercial machine learning projects fail, and 9-in-10 fully trained predictive models are not good enough to make it to production. These wastes of resources are not without economic and ecological consequences.  Considering that 97\% of data still sits unused in organizations according to Gartner, Inc., the societal impact of this problem is bound to get worse if nothing is done. The approach consisting of devising processes to reduce unnecessary risk and slash wastes in business ventures,  often known as the lean methodology, has been applied to manufacturing and startups with great success.

In machine learning projects, wastes are typically of two kinds. The first kind are experiments that fail, and that we could have anticipated would fail. The second kind are experiments that fail, and that we let run till the end, even though we could have anticipated they would fail at some point during the execution. We argue that the exhaustive trial-and-error approach to building a new model from scratch or improving a production model, which is reinforced by AutoML platforms, contributes to wastes of the first kind in that they include several trials that are not worth running. Wastes of the second kind are typically due to waiting until a model is fully trained to realize that it is overfitted. 

To avoid these wastes, it suffices to answer a fundamental question prior to, and without, learning any predictive model: what are the theoretical-best performance metrics that may be achieved when using explanatory variables $\bm{x} \in \mathcal{X}$ to predict the business outcome $\bm{y} \in \mathcal{Y}$? The problem is a classification (resp. regression) problem when the set $\mathcal{Y}$ is finite (resp. continuous).  Avoiding experiments with low best outcome achievable would guard us from wastes of the first kind.  Additionally, wastes of the second kind can be mitigated by noting that, if during training a model's training performance far exceeds the theoretical-best achievable,  it would likely fail to generalize and,  as such, it should be preemptively and abruptly terminated. Related to the theoretical-best performances achievable is the mutual information, defined as $$I(\displaystyle \vy; \displaystyle \vx) := \int_{\mathcal{X} \times \mathcal{Y}} \log \frac{d P_{\displaystyle \vx, \displaystyle \vy}}{dP_{\displaystyle \vx} \otimes P_{\displaystyle \vy}} dP_{\displaystyle \vx, \displaystyle \vy}$$ where $P_{\displaystyle \vx, \displaystyle \vy}$ (resp. $P_{\displaystyle \vx}$, $P_{\displaystyle \vy}$) is the (joint) probability measure of $(\displaystyle \vx, \displaystyle \vy)$ (resp. $\displaystyle \vx$, $\displaystyle \vy$), and $d P_{\displaystyle \vx, \displaystyle \vy}/dP_{\displaystyle \vx} \otimes P_{\displaystyle \vy}$ is the Radon-Nikodym derivative of the joint probability measure with respect to the product measure of $P_{\displaystyle \vx} $ and $P_{\displaystyle \vy}$. \footnote{For special analytical expressions depending on whether variables are continuous, categorical or mixed see Table \ref{tab:mutual_information} in the Appendix.} 

The mutual information quantifies the extent to which $\bm{x}$ is informative about $\bm{y}$.  As such, one would expect that a mathematical relationship exists between mutual information and the highest performances achievable. As it turns out, this is indeed the case for the theoretical-best $R^2$,  Mean Square Error, classification accuracy,  and log-likelihood per observation achievable (without overfitting). We discuss these relationships in Section \ref{sct:theo_best}. We formally introduce the LeanML design pattern in Section \ref{sct:lean_ml} as the structure that predictive machine learning projects should adopt to slash avoidable wastes.  We showcase the feasibility and efficacy of the LeanML approach using synthetic and real-life experiments in Section \ref{sct:app}. But first we review related works.

\section{Related Works}
\label{sct:lit_rev}
Characterizing the theoretical-best classification accuracy achievable is a decades old problem. A prolific line of investigation has been to relate the conditional entropy $h\left(y \vert \bm{x}\right)$ to the error probability defined as $$e = \underset{\mathcal{M}: y \rightarrow \bm{x}  \rightarrow z}{\min} ~ \mathbb{P}\left(y \neq z \right) := 1 - \bar{\mathcal{A}}\left( P_{y, \bm{x}} \right),$$ where the min is taken across all $q$-classes classifiers $\mathcal{M}$ with generative graphical model $y \rightarrow \bm{x}  \rightarrow z$ (\cite{feder1994relations}).  

We use the following definition of the entropy\footnote{This definition includes the Shannon (when $\mu$ is the counting measure) and differential (when $\mu$ is Lebesgue's measure) entropies as special cases,  and extends these to vectors with a mix of continuous and categorical coordinates.} $$h\left(\displaystyle \vx \right) := -\int_{\mathcal{X}} \frac{d P_{\displaystyle \vx}}{d\mu}  \log  \frac{d P_{\displaystyle \vx}}{d\mu}  d\mu, $$ where $\mu$ is a base measure, from which the conditional entropy follows as $h\left(y \vert  \displaystyle \vx \right) := h\left(y,  \displaystyle \vx \right) - h\left( \displaystyle \vx \right)$. One such relation is Fano's strong bound (\cite{fano1949transmission}),  which reads $$\bar{\mathcal{A}}\left( P_{y, \bm{x}} \right) \leq \bar{h}_q^{-1}\left( h\left(y \vert \bm{x}\right) \right),$$ where $\bar{\mathcal{A}}\left( P_{y, \bm{x}} \right)$ is the highest classification accuracy achievable, and $\bar{h}_q^{-1}$ is the inverse of the function $$\bar{h}_q(a) := -a\log a - (1-a) \log \frac{1-a}{q-1}, ~~ a \in [ \frac{1}{q}, 1].$$ Along the same line, \cite{hellman1970probability} proved that $$1 - \frac{h(y\vert \bm{x})}{2 \log 2} \leq \bar{\mathcal{A}}\left( P_{y, \displaystyle \vx} \right),$$ where the logarithm is natural and the entropy is in nats, as it will be the case throughout this paper.  Although the Fano and Hellman-Raviv bounds can be very far apart,\footnote{In the binary case, the gap can be as wide as $0.16$.} it has been shown that they are both tight (\cite{zhao2013beyond}). In other words,  for a given value of the conditional entropy $h(y\vert \bm{x})$,  the highest classification accuracy we may achieve (i.e. $\bar{\mathcal{A}}\left( P_{y, \bm{x}} \right)$) can either be $\bar{h}_q^{-1}\left( h(y\vert \bm{x})\right)$,  $1 - \frac{ h(y\vert \bm{x})}{2\log 2}$,  or anywhere in between,  depending on the nature of the joint distribution $P_{y, \displaystyle \vx}$. 

In Section \ref{sct:fano}, we provide a constructive proof of Fano's inequality, thanks to which we may conclude that Fano's strong bound can always be reached so long as explanatory variables are uniformly informative about the label --- i.e. the function $* \to h\left(y \vert \bm{x}=*\right)$ is constant on the input domain $\mathcal{X}$ (see Theorem \ref{theo:hacc}).  This sufficient condition is far from necessary however and, in practice,  we find that variations of $* \to h\left(y \vert \bm{x}=*\right)$ on the input domain that are able to push $\bar{\mathcal{A}}\left( P_{y, \bm{x}} \right)$ to the Hellman-Raviv bound are pathological in nature. Even when the strong Fano bound is not reached, it can be used as upper-bound of $\bar{\mathcal{A}}\left( P_{y, \bm{x}} \right)$ to mitigate wastes of the first and second kinds in machine learning projects.  In such an instance, the closer $\bar{\mathcal{A}}\left( P_{y, \bm{x}} \right)$ is to Fano's strong bound, the more wastes we will be able to anticipate and avoid.

As for the true log-likelihood per observation of a supervised learner $\mathcal{M}$ with predictive pdf or pmf $p_\mathcal{M}$,  defined as $$\mathcal{LL}\left( \mathcal{M} \right) := E_{P_{y, \bm{x}}}\left[ \log p_\mathcal{M} \right],$$ it follows from \cite{reid2011information} and \cite{duchi2018multiclass} that, in the case of binary and multiclass classification problems, the highest true log-likelihood per observation is equal to the negative conditional entropy $-h\left(y \vert \bm{x}\right)$. Proposition \ref{prop:ll} extends this result to regression problems.

In regards to the regression Mean Square Error (MSE),  when $y$ and $\displaystyle \vx$ are $L^2$,  minimizing the MSE incurred when predicting $y$ using $\displaystyle \vx$ is equivalent to finding the orthogonal projection of $y$ on the sigma-algebra generated by $\displaystyle \vx$.  The solution is widely known to be the conditional expectation $E\left(y \vert \displaystyle \vx\right)$ (\cite{dellacherie2011probabilities}),  and the associated optimal MSE is $$\bar{MSE}_c\left( P_{y, \bm{x}} \right) :=E\left[y^2 - E\left(y \vert \displaystyle \vx\right)^2 \right]. $$ \cite{brillinger2004some} suggested using the inequality $$E\left[ \left(y- f(\displaystyle \vx) \right)^2 \right] \geq  \frac{e^{2h(y)}}{2\pi e} e^{-2I\left(y; \displaystyle \vx \right)}$$ to lower-bound the MSE that one may achieve when using $\displaystyle \vx$ to predict $y$.  However,  this lower-bound is not tight in the non-Gaussian case,  and the MSE cannot always be as small as $\frac{e^{2h(y)}}{2\pi e} e^{-2I\left(y; \displaystyle \vx \right)}$. 

More generally,  it is not possible to directly estimate the optimal MSE $\bar{MSE}_c\left( P_{y, \bm{x}} \right)$, without first learning the best predictive model $f : \bm{x} \to E\left(y \vert \displaystyle \vx\right)$,  or making arbitrary distribution assumptions such as assuming Gaussianity, which would be contrary to the objective and the spirit of LeanML.  

Fortunately, in Section \ref{sct:trick}, we introduce a simple information-theoretical trick which we denote the \emph{variance-entropy swap trick},  to generalize performance or loss metrics such as the $R^2$ and the MSE, that are defined using a conditional variance term, to non-Gaussian distributions. The generalized metrics are identical to the classical ones in the Gaussian case (e.g. Ordinary Least Square and Gaussian Process Regression (\cite{rasmussen2003gaussian})), but better capture the notion of risk for fat-tailed residual distributions. Although classic and generalized metrics can vary drastically for a given model $\mathcal{M}$, we show empirically that their theoretical-best values are so close that one may be used as proxy for the other.  Given that estimating the theoretical-best generalized metrics can be done without making arbitrary distribution assumptions and without learning any predictive model, this allows us to circumvent the aforementioned limitation in estimating theoretical-best classic metrics.

Coincidentally, the generalized $R^2$ we introduce,  namely $$R^2\left( \mathcal{M}\right) := 1-e^{-2I\left(\bm{y}; \bm{z} \right)}$$ when model $\mathcal{M}$ makes prediction $\bm{z} = f\left(\bm{x}\right)$ about $\bm{y}$, naturally extends to classification problems. The idea of applying the variance-entropy swap trick to extend the $R^2$ to classification problems is closely related to the pseudo-$R^2$ introduced by  \cite{cox1989analysis} for logistic regressions,  namely $$\text{Pseudo-R}^2\left( \mathcal{M} \right) = 1 - e^{-2\left( \hat{\mathcal{LL}}\left( \mathcal{M} \right) - \hat{\mathcal{LL}}\left( \mathcal{M}^0 \right) \right)},$$ where $\hat{\mathcal{LL}}$ is the empirical log-likelihood per observation, and $\mathcal{M}^0$ is the baseline model consisting of ignoring explanatory variables. In effect, $E\left[ \hat{\mathcal{LL}}\left( \mathcal{M} \right) - \hat{\mathcal{LL}}\left( \mathcal{M}^0 \right)  \right] = I\left(\bm{y}; \bm{z} \right).$

\cite{joe1989estimation,joe1989relative} also suggested using $1-e^{-2I\left(\bm{y}; \bm{z} \right)}$, but as a generalized correlation coefficient between $\bm{y}$ and $\bm{z}$. We derive the highest generalized $R^2$ achievable and the lowest MSE achievable in Proposition \ref{prop:best}.

\section{Theoretical-Best Supervised Learning Performances}
\label{sct:theo_best}
We consider predicting an output $\displaystyle \vy \in \mathcal{Y}$ using inputs $\displaystyle \vx \in \mathcal{X}$. The problem is a classification (resp. regression) problem when $\displaystyle \vy$ is categorical (resp. continuous). We use $\mathcal{M}$ to denote a generic supervised learning model which, without loss of generality, we represent by the generative graphical model $\displaystyle \vy \rightarrow \displaystyle \vx  \rightarrow \displaystyle \vz $. $\displaystyle \vz \in \mathcal{Y}$ typically represents the knowledge the model extracts about $\displaystyle \vy$ from  $\displaystyle \vx$. $\mathcal{M}^\infty$ denotes the \emph{oracle} supervised learner defined by the generative graphical model $\displaystyle \vy \rightarrow  \displaystyle \vx \rightarrow \displaystyle \vz^\infty$ where $P_{\displaystyle \vy \vert \displaystyle \vz^\infty} = P_{\displaystyle \vy \vert \displaystyle \vx}$. In other words, $\displaystyle \vz^\infty$ still has all the insights about $\displaystyle \vy$ that were in $\displaystyle \vx$. $\mathcal{M}^0$ denotes the \emph{baseline} (unbiased) supervised learner defined by the generative graphical model $\displaystyle \vy \rightarrow \displaystyle \vx \rightarrow \displaystyle \vz^0$ where $P_{\displaystyle \vy \vert \displaystyle \vz^0} = P_{\displaystyle \vy}$ (that is, $\displaystyle \vz^0$ has no insights about $\displaystyle \vy$) and $E(\displaystyle \vy)=E(\displaystyle \vz^0)$ for regression problems. As previously mentioned, we use the symbols $y$ and $z$ in-lieu-of $\displaystyle \vy$ and $\displaystyle \vz$ when the treatment is specific to one-dimensional outputs.

\subsection{The Variance-Entropy Swap Trick}
\label{sct:trick} 
It is well known that variance and conditional variance are weak measures of risk and residual risk for most distributions.  Gaussian distributions are a notable exception. The variance (resp.  conditional variance) of a Gaussian is as good a measure of uncertainty (resp.  conditional uncertainty) as it gets in the sense that any other measure of uncertainty (resp. conditional uncertainty) can be expressed as a function thereof.  The entropy and the conditional entropy are much better alternatives.  Many distributions such as the Cauchy distribution have undefined or infinite moments, but well-defined and finite entropies.  Additionally,  two random variables are independent if and only if entropy and conditional entropy are equal, but variance and conditional variance do not suffice to conclude statistical independence.

In regression problems,  loss functions and performance metrics that are based on a conditional variance implicitly rely on the assumption that residuals are Gaussian to be general enough measures of residual risk.  For instance,  for a regression model $\mathcal{M}$ making prediction $z = f\left(\displaystyle \vx \right)$ associated to inputs $\displaystyle \vx$,  the (classic) Mean Square Error,  which we recall is defined as $$MSE_c \left(\mathcal{M}\right) := E\left[\left(y-z\right)^2 \right] =  \mathbb{V}\text{ar} \left( y \vert z \right) + \left[E\left( y-z \right)\right]^2,$$
is often used as loss function in conjunction with the Gaussian assumption on residuals (e.g. in GP regression and OLS). 

Similarly,  the (classic) $R^2$ defined as $$R_c^2 \left(\mathcal{M}\right)  = 1 - \frac{\mathbb{V}\text{ar}\left(y \vert z \right)}{\mathbb{V}\text{ar}(y)}, $$
is only a general enough measure of regression performance when $y$ is Gaussian both unconditionally,  and conditional on $z$,  an assumption often embedded in regression models (e.g.  OLS and GP regression),  which we will refer to from now on as the Gaussian assumption.

When the Gaussian assumption is met,  we have $\mathbb{V}\text{ar}\left(y \vert z \right)/\mathbb{V}\text{ar}(y) = e^{-2I\left( y; z \right)}$, and we may simply rewrite the (classic) MSE and $R^2$ as 
\begin{align}
\label{eq:new_mse}
MSE_c \left(\mathcal{M}\right) = \mathbb{V}\text{ar} \left( y \right) e^{-2I\left(y  ; z \right)} + \left[E\left( y  -  z \right)\right]^2
\end{align}
and
\begin{align}
\label{eq:new_rsq}
R_c^2 \left(\mathcal{M}\right) = 1 - e^{-2I\left( y; z \right)}.
\end{align}
When the Gaussian assumption is not met, $MSE_c\left(\mathcal{M}\right)$ and $R_c^2\left(\mathcal{M}\right)$ do not fully capture residual risk. Instead,  we use Equations (\ref{eq:new_mse}) and (\ref{eq:new_rsq}) as more general and robust alternatives.  

\begin{definition}
The \emph{generalized Mean Square Error} of a regression model $\mathcal{M}$ with generative graphical model $\displaystyle \vy \rightarrow \displaystyle \vx  \rightarrow \displaystyle \vz $ reads
\begin{align}
MSE \left(\mathcal{M}\right) = \mathbb{V}\text{ar} \left( \displaystyle \vy \right) e^{-2I\left(\displaystyle \vy  ; \displaystyle \vz \right)} + \left[E\left( \displaystyle \vy  - \displaystyle \vz \right)\right]^2.
\end{align}
\end{definition}

\begin{definition}
The \emph{generalized} $R^2$ of a supervised learner $\mathcal{M}$ with generative graphical model $\displaystyle \vy \rightarrow \displaystyle \vx  \rightarrow \displaystyle \vz $ reads
\begin{align}
\label{eq:new_rsq2}
R^2 \left(\mathcal{M}\right) = 1 - e^{-2I\left( \displaystyle \vy; \displaystyle \vz \right)}.
\end{align}
\end{definition}

We refer to swapping the ratio $\mathbb{V}\text{ar}\left(\displaystyle \vy \vert \displaystyle \vz \right)/\mathbb{V}\text{ar}(\displaystyle \vy)$ for $e^{-2I\left( \displaystyle \vy; \displaystyle \vz \right)}$ as the \emph{variance-entropy swap trick}.  

\textbf{Remarks}: Equation (\ref{eq:new_rsq2}) extends the notion of $R^2$ to classification problems.  Unlike in regression problems,  the perfect generalized $R^2$ in a $q$-classes classification problem is not $1$ but $1-e^{-2\log q}$.  This is an artifact of the difference between differential and Shannon mutual informations of two fully dependent random variables.  

As previously discussed, when both $z$ and $\epsilon := y-z$ are Gaussian,  $R^2\left(\mathcal{M}\right) = R_c^2\left(\mathcal{M}\right)$ and  $MSE\left(\mathcal{M}\right) = MSE_c\left(\mathcal{M}\right)$. More generally, when either $z$ or $\epsilon$ is Gaussian (e.g. GP regression with a non-Gaussian noise, or Deep Regession with Gaussian residuals), it is easy to prove that $R^2\left(\mathcal{M}\right) \leq R_c^2\left(\mathcal{M}\right)$ and  $MSE\left(\mathcal{M}\right) \geq MSE_c\left(\mathcal{M}\right)$, and that the gap grows with the entropy deficit of the non-Gaussian variable out of the two (relative to the entropy of the Gaussian distribution with the same variance). 

One way to think about this is that, for regression problems, generalized metrics penalize classic metrics for failing to account for risk beyond the second order.

\subsection{Maximum Achievable True Log-Likehood Per Observation}
The following result is a direct consequence of the non-negativity of the KL divergence, and is proved in Appendix \ref{proof:prop:ll}.
\begin{proposition}
\label{prop:ll}
The highest true log-likelihood per observation (defined as $\mathcal{LL}\left( \mathcal{M} \right) := E_{P_{y, \bm{x}}}\left[ \log p_\mathcal{M} \right]$) achievable by a supervised learner $\mathcal{M}$ using $\bm{x}$ to predict $\bm{y}$ and that has predictive pmf or pdf $p_\mathcal{M}$, is 
\begin{align}
\bar{\mathcal{LL}}\left( P_{\displaystyle \vy, \displaystyle \vx} \right) :&=\mathcal{LL}\left(\mathcal{M}^0\right) +  I\left(\displaystyle \vy ; \displaystyle \vx \right) \nonumber \\
&=  -h\left(\displaystyle \vy\right) + I\left(\displaystyle \vy ; \displaystyle \vx \right). \nonumber
\end{align}
It is achieved by the \emph{oracle} supervised learner $\mathcal{M}^\infty$.
\end{proposition}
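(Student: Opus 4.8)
The plan is to derive the result directly from Gibbs' inequality (equivalently, the non-negativity of $\KL$), as the statement anticipates. Any supervised learner $\mathcal{M}$ with graphical model $\vy \rightarrow \vx \rightarrow \vz$ produces, for each realization of $\vx$ (through its prediction $\vz$), a predictive density $p_\mathcal{M}$ on $\mathcal{Y}$ taken with respect to the base measure $\mu$; write the induced conditional density as $q_\mathcal{M}(\vy \mid \vx)$. First I would use the tower property to write $\mathcal{LL}(\mathcal{M}) = E_{P_{\vx}}\big[ E_{P_{\vy \mid \vx}}[\log q_\mathcal{M}(\vy \mid \vx)] \big]$ and set $p(\vy \mid \vx) := dP_{\vy \mid \vx}/d\mu$ as the true posterior density, against which every $q_\mathcal{M}$ will be compared.

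The core step is then, for $P_{\vx}$-almost every $\vx$, the identity $E_{P_{\vy \mid \vx}}\big[\log \tfrac{p(\vy \mid \vx)}{q_\mathcal{M}(\vy \mid \vx)}\big] = \KL\big(P_{\vy \mid \vx} \,\|\, q_\mathcal{M}(\cdot \mid \vx)\big) \ge 0$, with equality iff $q_\mathcal{M}(\cdot \mid \vx) = p(\cdot \mid \vx)$ $\mu$-a.e. (if $q_\mathcal{M}$ vanishes on a set of positive $P_{\vy \mid \vx}$-mass, the left side is $+\infty$ and $\mathcal{LL}(\mathcal{M}) = -\infty$, so the bound is vacuous). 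Integrating over $\vx$ gives $\mathcal{LL}(\mathcal{M}) \le E_{P_{\vy, \vx}}[\log p(\vy \mid \vx)] = -h(\vy \mid \vx)$, the last equality being the definition of conditional entropy applied to the density $p(\vy \mid \vx)$.

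It remains to rewrite the bound and exhibit a maximizer. By the chain rule $h(\vy \mid \vx) = h(\vy, \vx) - h(\vx)$, and factoring the Radon--Nikodym derivative as $dP_{\vx, \vy}/d\mu = (dP_{\vx, \vy}/dP_{\vx} \otimes P_{\vy}) \cdot (dP_{\vx}/d\mu) \cdot (dP_{\vy}/d\mu)$ and taking logs under $P_{\vx, \vy}$ yields $h(\vy) - h(\vy \mid \vx) = I(\vy; \vx)$, hence $-h(\vy \mid \vx) = -h(\vy) + I(\vy; \vx)$. Since the baseline learner $\mathcal{M}^0$ uses the marginal $P_{\vy}$ as its predictive law, $\mathcal{LL}(\mathcal{M}^0) = E_{P_{\vy}}[\log p_{\vy}] = -h(\vy)$, which gives the first displayed form $\bar{\mathcal{LL}} = \mathcal{LL}(\mathcal{M}^0) + I(\vy; \vx)$. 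For attainment, the oracle $\mathcal{M}^\infty$ satisfies $P_{\vy \mid \vz^\infty} = P_{\vy \mid \vx}$, so its induced conditional density is $q_{\mathcal{M}^\infty}(\cdot \mid \vx) = p(\cdot \mid \vx)$, which saturates the $\KL$ bound for every $\vx$.

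The conceptual content here is light; the care goes into measure-theoretic bookkeeping. The main obstacle is ensuring that the predictive density $p_\mathcal{M}$ and the (generalized) entropy are taken with respect to the \emph{same} base measure $\mu$ — counting measure for classification, Lebesgue for regression, a product of the two for mixed coordinates — so that $\log p_\mathcal{M}$ and $-h(\vy \mid \vx)$ are genuinely comparable; handling the degenerate $q_\mathcal{M} = 0$ case cleanly; and verifying $h(\vy) - h(\vy \mid \vx) = I(\vy; \vx)$ for the generalized entropies in the regime where some coordinates of $\vy$ are categorical and others continuous, which is exactly where the factorization of the Radon--Nikodym derivative above does the work.
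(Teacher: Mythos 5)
Your proposal is correct and follows essentially the same route as the paper's proof: both reduce the gap $\bar{\mathcal{LL}}\left( P_{\vy, \vx} \right) - \mathcal{LL}\left( \mathcal{M} \right)$ to the $P_{\vx}$-expectation of the KL divergence between the true conditional $P_{\vy \vert \vx}$ and the model's predictive distribution, invoke its non-negativity, and note that the oracle saturates it. Your version merely adds more explicit measure-theoretic bookkeeping (common base measure, the degenerate $q_\mathcal{M}=0$ case), which the paper leaves implicit.
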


\subsection{Maximum Achievable $R^2$ and Minimum Achievable MSE}
The following result is a direct consequence of the data processing inequality (\cite{incover1999elements}), and is proved in Appendix \ref{proof:prop:best}.

\begin{proposition}
\label{prop:best}
The highest generalized $R^2$ and lowest generalized MSE achievable by a supervised learner using $\displaystyle \vx$ to predict $\displaystyle \vy$ read $$\bar{R^2}\left( P_{\displaystyle \vy, \displaystyle \vx} \right) := 1- e^{-2 I\left(\displaystyle \vy; \displaystyle \vx \right)}$$ and 
\begin{align}
\bar{MSE}\left( P_{y, \displaystyle \vx} \right) :&= e^{-2I\left(y ; \displaystyle \vx \right)} \mathbb{V}\text{ar}\left( y \right) \nonumber \\
&= e^{-2I\left(y ; \displaystyle \vx \right)}MSE\left(\mathcal{M}^0\right). \nonumber
\end{align}
They are both achieved by the \emph{oracle} supervised learner $\mathcal{M}^\infty$.
\end{proposition}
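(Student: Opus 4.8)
The plan is to obtain both bounds as essentially one-line consequences of the data processing inequality applied to the Markov chain $\vy \rightarrow \vx \rightarrow \vz$ that, by construction, every supervised learner $\mathcal{M}$ obeys. Since $\vz$ is produced from $\vx$ alone, $\vy$ and $\vz$ are conditionally independent given $\vx$, so the data processing inequality gives $I\left(\vy; \vz\right) \leq I\left(\vy; \vx\right)$ for every $\mathcal{M}$.

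For the generalized $R^2$ I would use that $t \mapsto 1 - e^{-2t}$ is strictly increasing on $[0, \infty)$, whence $R^2\left(\mathcal{M}\right) = 1 - e^{-2I\left(\vy; \vz\right)} \leq 1 - e^{-2I\left(\vy; \vx\right)} = \bar{R^2}\left(P_{\vy, \vx}\right)$. For the generalized MSE I would split its defining expression $MSE\left(\mathcal{M}\right) = \mathbb{V}\text{ar}\left(y\right) e^{-2I\left(y; z\right)} + \left[E\left(y - z\right)\right]^2$ into its two summands: the first is at least $\mathbb{V}\text{ar}\left(y\right) e^{-2I\left(y; \vx\right)}$ because $t \mapsto e^{-2t}$ is decreasing and $\mathbb{V}\text{ar}(y) \geq 0$, and the second is nonnegative, so $MSE\left(\mathcal{M}\right) \geq e^{-2I\left(y; \vx\right)} \mathbb{V}\text{ar}(y)$. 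The rewriting $\mathbb{V}\text{ar}(y) = MSE\left(\mathcal{M}^0\right)$ is then immediate: $P_{y \vert z^0} = P_y$ forces $I\left(y; z^0\right) = 0$, and $E\left(z^0\right) = E(y)$ kills the bias term, leaving $MSE\left(\mathcal{M}^0\right) = \mathbb{V}\text{ar}(y)$.

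It then remains to verify that the oracle $\mathcal{M}^\infty$ attains both bounds, i.e. $I\left(\vy; \vz^\infty\right) = I\left(\vy; \vx\right)$ and, for the MSE, $E\left(\vz^\infty\right) = E\left(\vy\right)$. I would read the defining property $P_{\vy \vert \vz^\infty} = P_{\vy \vert \vx}$ as the sufficiency statement $\vy \perp \vx \mid \vz^\infty$; together with the Markov property $\vy \perp \vz^\infty \mid \vx$, applying the chain rule for mutual information to $I\left(\vy; \left(\vx, \vz^\infty\right)\right)$ in both orders yields $I\left(\vy; \vz^\infty\right) = I\left(\vy; \left(\vx, \vz^\infty\right)\right) = I\left(\vy; \vx\right)$, which is the precise form of "$\vz^\infty$ still has all the insights about $\vy$ that were in $\vx$". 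For the mean condition, translating the oracle's prediction by the constant $E\left(\vy\right) - E\left(\vz^\infty\right)$ changes neither the graphical model nor $I\left(\vy; \vz^\infty\right)$ (a translation is a bijection), so one may take the oracle unbiased and the bias term vanishes.

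The inequalities themselves are routine; the part that needs care is the equality case, namely justifying that "all the insights" is genuine sufficiency and that the chain-rule manipulations of $I$ are legitimate in the general mixed continuous/categorical setting (with possibly infinite entropies) in which $h$ and $I$ are defined here. It is also worth stating explicitly that the bias term $\left[E(y - z)\right]^2$ is the only obstruction between an arbitrary model's generalized MSE and $e^{-2I}\mathbb{V}\text{ar}(y)$, so the lower bound is attained exactly by the unbiased models whose prediction is information-theoretically sufficient for $\vy$.
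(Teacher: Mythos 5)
Your proof is correct and follows essentially the same route as the paper's: the data processing inequality for the chain $\vy \rightarrow \vx \rightarrow \vz$ (which the paper derives by expanding $I\left(\vy; \vx, \vz\right)$ in two orders, exactly as you do for the equality case) gives both bounds, and the sufficiency property $P_{\vy \vert \vz^\infty} = P_{\vy \vert \vx}$ turns them into equalities for the oracle. The only cosmetic difference is in the oracle's unbiasedness: you translate its prediction by a constant, whereas the paper asserts $E\left(z^\infty \vert \vx\right) = E\left(y \vert \vx\right)$ directly; your version is arguably the more careful of the two.
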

\textbf{Remarks}: Although the gap between generalized and classic performance metrics can be fairly large depending on the model $\mathcal{M}$,  in our experience (including the experiments of Section \ref{sct:app}), the gap between the theoretical-best classic metrics and the theoretical-best generalized metrics, which only depends on the true distribution $P_{y, \bm{x}}$, is typically far smaller, to the point of justifying using an estimation of a theoretical-best generalized $R^2$ (resp. MSE) as a proxy for the theoretical-best classic MSE (or $R^2$).  

We stress once more that, unless we make an arbitrary assumption on the true generative distribution such as the Gaussian assumption, the theoretical-best classic $R^2$ (resp. MSE) \emph{cannot be estimated directly} without first learning the best predictive model $\bm{x} \to E\left(y \vert \bm{x} \right)$,  which would defeat the purpose of the LeanML paradigm.
\subsection{Maximum Achievable Classification Accuracy}
\label{sct:fano}
In a $q$-classes classification problem, without loss of generality, we assume that the set of classes is $\mathcal{Y} = \{1, \dots, q\}$. The following result, which we derive in Appendix \ref{sct:fano_deriv}, provides specific and practical conditions under which Fano's strong bound (\cite{fano1949transmission}) is reachable.
\begin{theorem}
\label{theo:hacc}
The highest accuracy $\bar{\mathcal{A}}(P_{y, \displaystyle \vx})$ achievable by a classifier using $\displaystyle \vx$ to predict a categorical random variable $y \in \{1, \dots, q\}$ satisfies the strong Fano inequality $$\bar{\mathcal{A}}(P_{y, \displaystyle \vx}) \leq \bar{h}^{-1}_q\left( h(y)  - I\left( y; \displaystyle \vx\right) \right).$$Additionally, $$\bar{\mathcal{A}}(P_{y, \displaystyle \vx}) = \bar{h}^{-1}_q\left( h(y)  - I\left( y; \displaystyle \vx\right) \right)$$ and the \emph{oracle} classifier $\mathcal{M}^\infty$ achieves $\bar{\mathcal{A}}(P_{y, \displaystyle \vx})$, when the entropy of the conditional distribution, namely $h\left( y \vert \displaystyle \vx = * \right)$, is the same for all values $*$ of $\displaystyle \vx$ (i.e. $\displaystyle \vx$ is no more informative about $y$ in certain parts of the domain $\mathcal{X}$ than others), and when $q=2$ or the $(q-1)$ least likely outcomes under the conditional distribution $P_{y \vert \displaystyle \vx}$ are always equally likely (i.e. the information in $\displaystyle \vx$ about $y$ leaves no room for a clear runner-up).
\end{theorem}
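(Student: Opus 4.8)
The plan is to reduce the whole statement to a pointwise-in-$\vx$ analysis of the conditional pmf $P_{y \vert \vx = *}$ together with one application of Jensen's inequality, exploiting the analytic properties of $\bar h_q$. First I would make $\bar{\mathcal{A}}(P_{y,\vx})$ explicit: any classifier in the family $y \to \vx \to z$ has $z$ conditionally independent of $y$ given $\vx$, so its accuracy is $\mathbb{P}(y=z) = E_{\vx}\big[\sum_k P(y=k\vert\vx)P(z=k\vert\vx)\big] \le E_{\vx}\big[\max_k P(y=k\vert\vx)\big]$, with equality when $P_{z\vert\vx}$ concentrates on $\argmax_k P(y=k\vert\vx)$. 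Hence $\bar{\mathcal{A}}(P_{y,\vx}) = E_{\vx}[a(\vx)]$ where $a(*):=\max_k P(y=k\vert\vx=*)$, the supremum being attained by the Bayes classifier; since $a(*)\ge\tfrac1q$ we get $\bar{\mathcal{A}}\in[\tfrac1q,1]$, and the oracle $\mathcal{M}^\infty$, for which $P_{y\vert z^\infty}=P_{y\vert\vx}$ (hence the same law for the vector $P(y=\cdot\vert\vx)$), attains it too --- so the ``achieved by $\mathcal{M}^\infty$'' clause is really the statement that the supremum is attained, which holds unconditionally, and the substance of the second half of the theorem is the \emph{equality} with the Fano bound.

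Next I would record the elementary facts about $\bar h_q$ on $[\tfrac1q,1]$: $\bar h_q'(a)=\log\frac{1-a}{a(q-1)}\le 0$ and $\bar h_q''(a)=-\frac{1}{a(1-a)}<0$, so $\bar h_q$ is strictly decreasing and strictly concave and maps $[\tfrac1q,1]$ bijectively onto $[0,\log q]$ with a strictly decreasing inverse. Then comes the pointwise entropy bound: for each fixed $*$, among all pmfs on $\{1,\dots,q\}$ whose largest entry equals $a(*)$, the Shannon entropy is maximized --- and uniquely so --- by the one whose remaining $q-1$ entries all equal $\tfrac{1-a(*)}{q-1}$. Feasibility of this candidate, i.e.\ $\tfrac{1-a(*)}{q-1}\le a(*)$, is exactly $a(*)\ge\tfrac1q$, and the bound itself is just the fact that a pmf on $q-1$ atoms has entropy at most $\log(q-1)$ --- equivalently $\KL$-nonnegativity against the uniform law --- applied to the renormalized tail. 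This yields $h(y\vert\vx=*)\le \bar h_q(a(*))$, with equality if and only if the $q-1$ least likely outcomes of $P_{y\vert\vx=*}$ are equiprobable, which is automatic when $q=2$.

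Then I would assemble the chain, using $h(y\vert\vx)=h(y)-I(y;\vx)=E_{\vx}[h(y\vert\vx=*)]$:
\[ h(y)-I(y;\vx)\;\le\;E_{\vx}\big[\bar h_q(a(*))\big]\;\le\;\bar h_q\big(E_{\vx}[a(*)]\big)\;=\;\bar h_q\big(\bar{\mathcal{A}}(P_{y,\vx})\big), \]
where the first inequality is the expectation of the pointwise bound and the second is Jensen for the concave $\bar h_q$; applying the strictly decreasing bijection $\bar h_q^{-1}$ to the two ends flips the inequality and gives the strong Fano bound $\bar{\mathcal{A}}(P_{y,\vx})\le \bar h_q^{-1}\!\big(h(y)-I(y;\vx)\big)$. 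For the equality claim I would observe that this chain is tight exactly when (i)~the pointwise step is an equality for $P_\vx$-almost every $*$ --- i.e.\ $q=2$, or the $q-1$ least likely outcomes of $P_{y\vert\vx}$ are always equiprobable --- and (ii)~the Jensen step is an equality, which by strict concavity of $\bar h_q$ forces $a(*)$ to be $P_\vx$-a.s.\ constant. Under (i) one has $h(y\vert\vx=*)=\bar h_q(a(*))$ with $\bar h_q$ injective, so ``$a(*)$ a.s.\ constant'' is equivalent to ``$*\mapsto h(y\vert\vx=*)$ constant'', which is precisely the stated hypothesis; combined with the attainment noted above this delivers both the equality and the fact that $\mathcal{M}^\infty$ realizes it.

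The hard part is not any single computation but getting the equality bookkeeping exactly right: handling ties in the $\argmax$ of $P_{y\vert\vx=*}$ (when the most likely class is not unique), carrying the ``for $P_\vx$-almost every $*$'' qualifier cleanly through every step, and --- the genuinely substantive point --- checking that the conditional-entropy hypothesis is \emph{equivalent} to, rather than merely sufficient for, tightness of the Jensen step once the ``equiprobable tail'' hypothesis is imposed, so that the two displayed conditions are jointly necessary and sufficient for the equality.
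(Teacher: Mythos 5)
Your proof is correct and follows essentially the same route as the paper's: the pointwise maximum-entropy bound $h(y\vert\vx=*)\le\bar h_q(a(*))$, tight exactly when the $(q-1)$ least likely outcomes are equiprobable, followed by Jensen's inequality for the concave $\bar h_q$, whose tightness under strict concavity yields the uniform-informativeness condition. The only notable differences are cosmetic: you work directly with the Bayes classifier on $\vx$ and so never need the data processing inequality (which the paper uses to pass from a generic model output $z$ back to $\vx$), you apply Jensen to $\bar h_q$ rather than to its concave inverse, and you correctly observe that the argument in fact makes the two stated conditions jointly necessary as well as sufficient for equality, a slight strengthening of what the theorem claims.
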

\textbf{Remarks:} In multiclass classification problems (i.e. $q>2$),  when the `no clear runner-up' condition of Theorem \ref{theo:hacc} is not met,  to reach the strong Fano bound,  we can trade the question `how accurate can a classifier using $\displaystyle \vx$ to predict $y$ be overall' for the (arguably more granular) $q$ questions `how accurate can a classifier using $\displaystyle \vx$ to predict whether $y$ will take the specific value $i$ be' (i.e. $i$-vs-rest classification) for each $i \in \{1, \dots, q\}$.  The latter are binary classification problems to which the `no clear runner-up' condition does not apply.  

As for the uniform-informativeness condition,  it is a sufficient condition for the bound to be reachable, but it is far from being necessary.  In our experience,  the effect of any departure from this condition will typically be small relative to the estimation error of the mutual information, and variations of $* \to h\left(y \vert \bm{x}=*\right)$ on the input domain that are able to push $\bar{\mathcal{A}}\left( P_{y, \bm{x}} \right)$ to the Hellman-Raviv bound are pathological in nature.

\section{Making Machine Learning Lean}
\label{sct:lean_ml}
To slash avoidable wastes in supervised learning projects,  we propose structuring them in a manner that abides by two core principles.

\subsection{The LeanML Principles}
\textbf{Principle \#1: Always condition running an experiment on its feasibility.}

Whether a data scientist is trying to predict a specific business outcome for the first time, or trying to improve an already deployed production model, it is crucial that he/she first estimates the best outcome he/she should realistically hope for, before starting the project. If a satisfactory enough outcome cannot be generated, then starting the project would be wasteful.

For instance, prior to training a predictive model, a data scientist should always first value his/her data (i.e. estimate the highest performance achievable). If the theoretical-best performance achievable is not satisfactory for the business use case, he/she should focus on gathering additional and complementary explanatory variables,  value the new set of explanatory variables, and repeat until he/she gathers explanatory variables from which a desirable business outcome can be achieved.

Similarly, a data scientist attempting to improve a deployed production model should first question the extent to which it is possible to do so. Because the data scientist stumbled upon a fancy new class of models he wasn't aware of, doesn't mean his/her production model can be improved. To determine by how much the production model can be improved in a model-driven fashion (i.e. using the same explanatory variables), the data scientist should compare the performance of the production model to the best performance achievable. Only if there is a large enough gap, should the data scientist consider training new models.  

If the data scientist finds that the production model is performing at the theoretical best level, then he/she should be looking for additional and complementary explanatory variables to use in order to boost performance. Once new explanatory variables are found that the data scientist suspects have the potential to boost the performance of the production model, the data scientist should first compute the highest performance boost he/she should expect.\footnote{By comparing the highest performances achievable using the old set of explanatory variables to the highest performance achievable using the old and new set of explanatory variables combined.} Only if the expected performance boost is large enough, should the data scientist attempt to improve the production model by retraining models in his/her toolbox with the new set of explanatory variables.

\textbf{Principle \#2: Pro-actively terminate an experiment you started, as soon as you can reliably determine it will fail.}

Another big source of wastes in ML projects is the need to discard overfitted models.  To detect when a model being trained is likely to overfit, we can compare the running lowest loss (e.g. log-likelihood per observation or MSE) or the running highest performance (e.g. $R^2$ or classification accuracy)  to the theoretical-best achievable.  If the running loss (resp. performance) is lower (resp. higher) than the theoretical-best by more than a (possibly null) threshold,  then this is a strong indication that the fully trained model will end up overfitting,  and therefore that we need to `cut our losses' by preemptively terminating training. The early-termination we advocate here is not to be confused with `early-stopping' methods that aim at preventing overfitting by stopping an optimizer before it has a chance to overfit (\cite{smale2007learning, yao2007early}); it complements these methods. Indeed, whether `early-stopping' methods are utilized or not, if the running loss (resp. performance) happens to be much lower (resp. much higher) than the theoretical-best during training, then this is strong evidence that the model being trained will end up overfitting, and that any resource spent between when this determination is made and when training stops would go to waste.

\subsection{The LeanML Design Pattern}
The LeanML design pattern is an implementation of the foregoing LeanML principles, and advocates structuring predictive modeling projects as follows.

\textbf{Step 1: Data Valuation.} The highest performance achievable using available explanatory variables $\bm{x}$ to predict the business outcome of interest $\bm{y}$ should be estimated, and the project should not proceed until explanatory variables are found that could yield a satisfactory outcome when used to predict the business outcome.

\textbf{Step 2: Model-Free Variable Selection.} Variables or features that are either not informative about the label $\bm{y}$ or redundant should be eliminated based on the highest performances achievable. Failure to properly select variables or features could result in lengthier and costlier training, a higher chance of overfitting, and more rapid performance decay when the model is used live. Additionally, the more features a model uses, the more susceptible real-time instances of the model will be to an outage of the feature delivery service(s), with obvious impact on the bottom line, not least higher maintenance costs. An example implementation is the greedy model-free variable selection algorithm that proceeds as follows. The first variable is selected as the variable that could yield the highest performance when used by itself. For $i>1$, the $i$-th variable is selected as the variable, among all variables not yet selected that, when added to the $i-1$ variables previously selected, will increase the highest performance achievable the most. The selection stops when a reasonable criteria is met, such as the number of variables selected so far exceeding a capacity threshold and/or the highest performance achievable with selected variables exceeding a certain percentage (e.g. 95\%) of the highest performance achievable using all variables.

\textbf{Step 3: Lean Model Building.} Model training should be terminated as soon as the running loss (resp. performance) is lower (resp. higher) than the theoretical-best estimated in Step 1, by more than a (possibly null) threshold on the basis that this is strong indication that the model will end up overfitting. Terminated models shoud be discarded.

\textbf{Step 4: Lean Model Improvement.} Before attempting to improve a model, data scientists should first assess the extent to which it can be improved.  A model whose performance is close to the theoretical-best performance estimated in Step 1 cannot be improved upon without resorting to additional and complementary explanatory variables. When the model $\mathcal{M}_0$ to improve, which we assume makes prediction $f_0(\bm{x})$ associated to $\bm{x}$, does not perform at the theoretical-best level, comparing the outputs of the model-free variable selection in Step 2 applied to the two pairs $\left(\bm{y}, \bm{x} \right)$ and $\left(f_0(\bm{x}), \bm{x} \right)$ can help shed some light on variables the model $\mathcal{M}_0$ under-utilized.  For regression problems, we may go further and adopt an iterative approach by repeating Steps 1-3, this time applied to regression residuals $\bm{y}_1 = \bm{y}-f_0(\bm{x})$,  to arrive at model $\mathcal{M}_1$ with prediction $f_1(\bm{x})$ about residual $\bm{y}_1$.  Done $i+1$ times,  this leads to the fine-tuned additive model $\mathcal{M}$ making predictions $f(\bm{x}) = \sum_{k=0}^i f_k\left(\bm{x} \right)$ about $\bm{y}$. It is important to note that, at each iteration,  Step 2 would effectively only select variables whose dependencies to the output $\bm{y}$ still aren't properly accounted for by the running additive model. Similarly, before attempting to improve model $\mathcal{M}_0$ trained with $\bm{x}$ using new explanatory variables $\bm{x}^\prime$, it is important to first estimate how much incremental performance $\bm{x}^\prime$ can bring about by comparing the highest performance achievable when predicting $\bm{y}$ using $\bm{x}$ and using $[\bm{x},  \bm{x}^\prime]$. Unless $\bm{x}^\prime$ can bring about a high enough performance increase, it wouldn't be worth retraining candidate models using $[\bm{x},  \bm{x}^\prime]$.

\section{Experiments}
\label{sct:app}
We estimate mutual informations using the recent MIND estimator of \cite{pmlr-v130-kom-samo21a}, which we find particularly suitable for LeanML, as it is very data-efficient and copes well with large input dimensions. See Appendix \ref{sct:mi_discussion} for an extended discussion on mutual information estimation, where we provide new insights into the links between MIND, MINE (\cite{belghazi2018mutual}) and NWJ (\cite{nguyen2010estimating}) so as to illustrate how exactly MIND is able to be much more data-efficient than competing alternatives. To estimate the differential entropy $h\left( \displaystyle \vy \right)$ of a random vector $\displaystyle \vy = \left(y_1, \dots, y_d \right)$, we suggest using the entropy decomposition $h\left( \displaystyle \vy \right) = h\left(  \displaystyle \vu_y \right) + \sum_{i=1}^d h(y_i)$ where $h\left(  \displaystyle \vu_y \right)$ is the entropy of the copula of $\displaystyle \vy$. We find that one-dimensional differential entropies $h\left(y_i\right)$ are best estimated using M-estimators coupled with kernel density estimation (\cite{parzen1962estimation}) or Dirichlet Process mixture models (\cite{escobar1995bayesian, teh2005sharing}).  As for estimating copula entropies,  this is only needed to estimate $\bar{\mathcal{LL}}$  when $\bm{y}$ is multi-dimensional, and we also suggest using MIND. The variance term in $\bar{MSE}$ is estimated as sample variance, and the Shannon entropy in $\bar{\mathcal{A}}$ is estimated using the frequency based plug-in estimator. 

\textbf{Data Valuation Experiments}: We illustrate the accuracy of our data valuation approach using synthetic data of which we may calculate the ground truth. We use $\mathcal{X} = [0, 1]^d$ and we choose as $P_{\bm{x}}$ the $d-$dimensional standard uniform. For regression problems,  given a function $f$, we define $y = f\left( \bm{x} \right) + \epsilon$, where $\epsilon$ is an independent Gaussian noise with standard deviation $\sigma$. For classification problems, we define $y = (1-s) \mathds{1}\left[ f\left(\bm{x}\right) \geq m \right] + s \mathds{1}\left[ f\left( \bm{x} \right) < m \right]$, where $s$ is an independent Bernoulli random variable taking value $1$ with probability $p_e$, and $0$ otherwise, and $m=E\left(f(\bm{x})\right)$. We use the following $4$ functions: $f_1(\bm{x}) \propto \sum_{i=1}^d \frac{x_i}{i}$,  $f_2(\bm{x}) \propto \sqrt{\left\vert \sum_{i=1}^d \frac{x_i}{i} \right\vert}$,  $f_3(\bm{x}) \propto -\left(\sum_{i=1}^d \frac{\vert x_i-0.5 \vert}{i}\right)^3$,  and $f_4(\bm{x}) \propto \tanh \left(\frac{5}{2} \sum_{i=1}^d \frac{(x_i-0.5)^2}{i} \right)$, with $\bm{x} := \left( x_1, \dots, x_d\right)$. The scaling coefficient of each function is chosen so that the sample variance is $1$. In regression problems, the highest achievable classic $R^2$ is easily found to be $\frac{1}{ 1+\sigma^2}$ and the lowest classic MSE achievable is easily found to be $\sigma^2$.  For classification problems,  regardless of $f$, when $p_e=0$, $s$ is always $0$ and $y=\mathds{1}\left[ f\left(\bm{x}\right) \geq m \right] := z$ can be perfectly classified from $\bm{x}$. The effect of $s$ for $p_e > 0$, is to switch the value of $z$ (from $0$ to $1$ and vice-versa) with probability $p_e$. Thus, the highest achievable accuracy should always be $\bar{\mathcal{A}}\left(P_{y, \bm{x}} \right)=1-p_e$. Because every $z$ has the same probability of being switched for any $\bm{x}$, the uniform-informativeness condition of Theorem \ref{theo:hacc} is met and, given that $q=2$,  Fano's strong bound can be reached.  We use every combination of $d \in \{1, 2, 5, 10\}$ and $\bar{R^2}\left( P_{y, \bm{x}} \right) \in \{0.99,  0.75, 0.5, 0.25\}$ for regression problems and $\bar{\mathcal{A}}\left(P_{y, \bm{x}} \right) \in \{1,  0.99,  0.75, 0.5\}$ for classification problems.  To gauge the variability of our estimators, for each combination, we run $10$ independent experiments, each with its own set of noise observations $\epsilon$ or $s$, but all with the same input draws, and we report the mean and the standard deviation of estimated theoretical-best performances across the $10$ runs. Each experiment is based on $d*1000$ i.i.d.  samples, and we estimate $m$ using simple Monte Carlo. Results are partly illustrated in Figures \ref{fig:best_rmse} and \ref{fig:best_rsq} for $d=1$ and $d=2$,  and fully summarized in Table \ref{tab:summary} for $d=10$.  All individual results are reported in Tables \ref{tab:full_rsq}, \ref{tab:full_rmse} and \ref{tab:full_acc} in the Appendix.  Although the Gaussian assumption is not met in these regression experiments, it can be seen in Table \ref{tab:summary} that our estimation of the theoretical best (generalized) metrics is able to recover the true theoretical best (classic) metrics almost perfectly.

Table \ref{tab:exhaustive} in the Appendix illustrates our estimated highest performances achievable in $38$ of the most popular UCI and Kaggle classification and regression datasets. The number explanatory variables in these datasets varies from $d=3$ to $d=385$, the number of observations varies from $n=303$ to $n=583,250$ and the number of classes in classification experiment varies from $q=2$ to $q=26$.

\begin{figure}
        \includegraphics[width=0.45\textwidth]{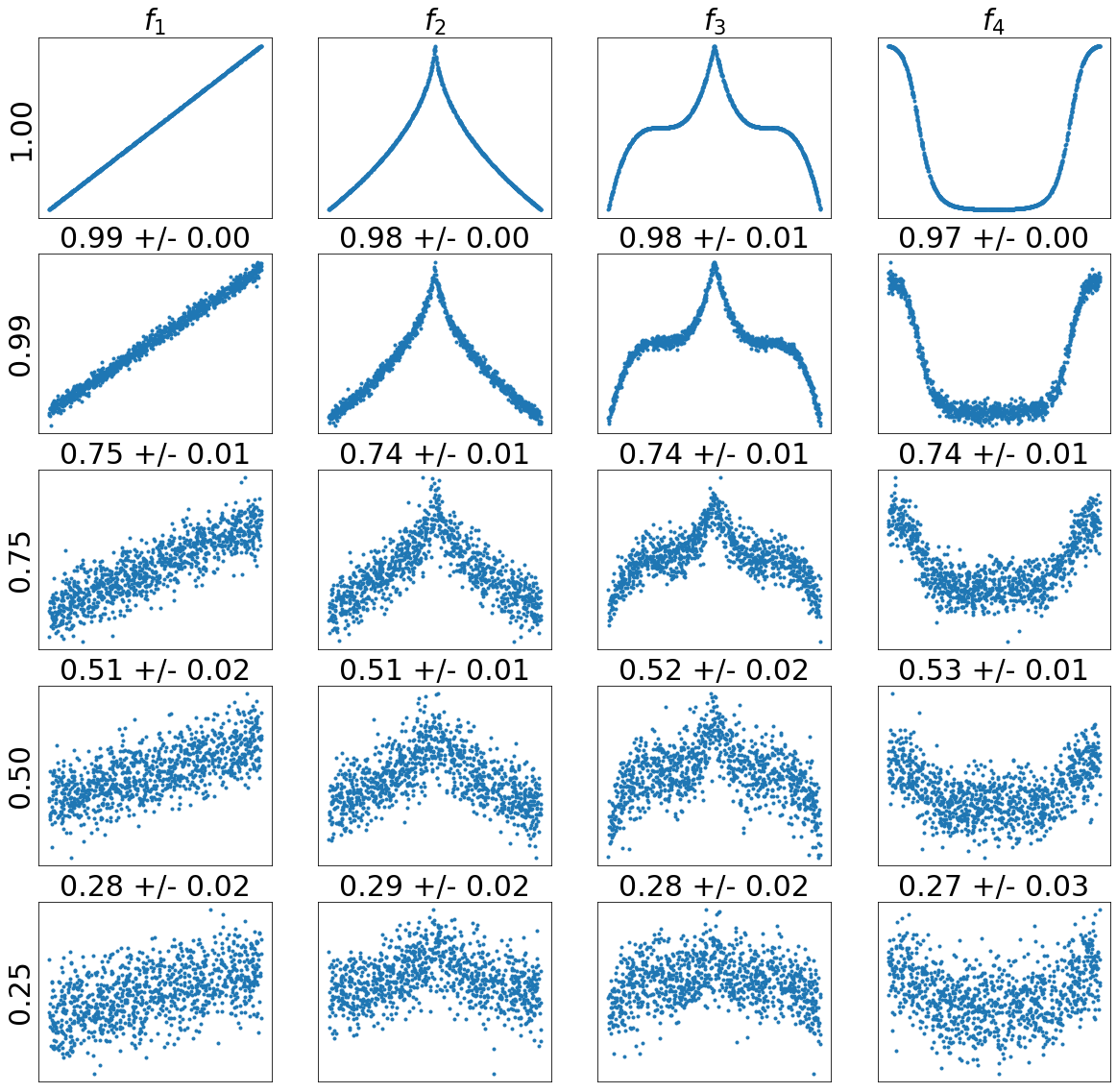}
        \caption{True theoretical-best (classic) $R^2$ ($y$-axis) and estimated theoretical-best (generalized) $R^2$ (upper $x$-axis) in the regression experiments of Section \ref{sct:app} for $d=1$,  for illustration purposes.}
      \label{fig:best_rmse}
\end{figure}
\begin{figure}
        \includegraphics[width=0.45\textwidth]{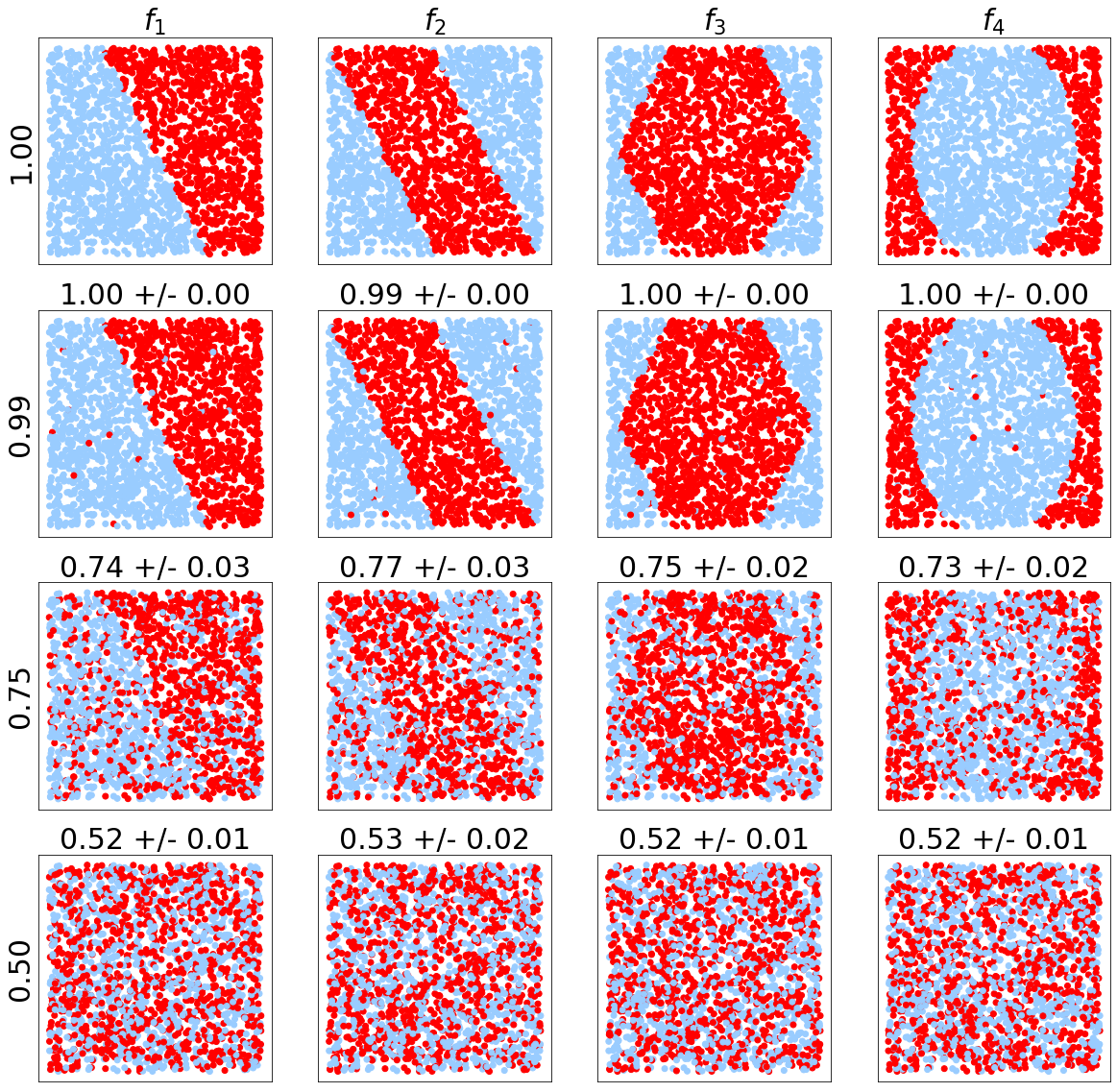} 
        \caption{True theoretical-best accuracy ($y$-axis) and estimated theoretical-best accuracy (upper $x$-axis) in the classification experiments of Section \ref{sct:app}, for $d=2$.}
       \label{fig:best_rsq}
\end{figure}

\begin{table*}[t]
\centering
\resizebox{0.7\textwidth}{!}{
\begin{tabular}{lcccc}
\hline
\hline
Ground Truth  & $f_1$ & $f_2$ & $f_3$ & $f_4$  \\
\hline
\hline
             Regression  &  &  &  &   \\
\hline
\hline
               &  & $R^2 (d=10)$ &  &   \\
\hline
$0.99$ &  $\bm{0.99 \pm 0.00}$ &  $\bm{0.99 \pm 0.00}$ &  $0.95 \pm 0.00$ &  $0.98 \pm 0.00$ \\ 
$0.75$ &  $\bm{0.73 \pm 0.01}$ &  $\bm{0.72 \pm 0.01}$ &  $0.64 \pm 0.02$ &  $\bm{0.73 \pm 0.01}$ \\ 
$0.50$ &  $\bm{0.49 \pm 0.01}$ &  $\bm{0.47 \pm 0.02}$ &  $0.41 \pm 0.01$ &  $\bm{0.48 \pm 0.01}$ \\ 
$0.25$ &  $\bm{0.25 \pm 0.01}$ &  $\bm{0.24 \pm 0.01}$ &  $\bm{0.21 \pm 0.02}$ &  $\bm{0.25 \pm 0.02}$ \\ 
\hline
               &  &  RMSE ($d=10$) &  &   \\
\hline
$0.10$ &  $\bm{0.11 \pm 0.00}$ &  $0.12 \pm 0.00$ &  $0.22 \pm 0.01$ &  $0.13 \pm 0.00$ \\ 
$0.58$ &  $\bm{0.60 \pm 0.01}$ &  $\bm{0.61 \pm 0.01}$ &  $0.69 \pm 0.02$ &  $0.60 \pm 0.00$ \\ 
$1.00$ &  $\bm{1.02 \pm 0.01}$ &  $\bm{1.03 \pm 0.02}$ &  $1.08 \pm 0.01$ &  $\bm{1.01 \pm 0.01}$ \\ 
$1.73$ &  $\bm{1.74 \pm 0.02}$ &  $\bm{1.75 \pm 0.02}$ &  $1.77 \pm 0.01$ &  $\bm{1.73 \pm 0.03}$ \\ 
\hline
\hline
             Classification  &  &  &  &   \\
\hline
\hline
               &  & Accuracy ($d=10$) &  &   \\
\hline
$1.00$ &  $\bm{0.99 \pm 0.00}$ &  $0.96 \pm 0.00$ &  $\bm{0.99 \pm 0.00}$ &  $\bm{0.99 \pm 0.00}$ \\ 
$0.99$ &  $\bm{0.97 \pm 0.03}$ &  $\bm{0.90 \pm 0.10}$ &  $\bm{0.98 \pm 0.00}$ &  $\bm{0.98 \pm 0.00}$ \\ 
$0.75$ &  $\bm{0.74 \pm 0.03}$ &  $\bm{0.67 \pm 0.05}$ &  $\bm{0.73 \pm 0.02}$ &  $\bm{0.74 \pm 0.02}$ \\ 
$0.50$ &  $0.57 \pm 0.03$ &  $\bm{0.55 \pm 0.03}$ &  $0.54 \pm 0.02$ &  $\bm{0.55 \pm 0.03}$ \\ 
\hline
\end{tabular}}
\caption{Comparison between true theoretical-best (classic) metrics and estimated theoretical-best (generalized) metrics, as described in Section \ref{sct:app} for $d=10$. Estimated metrics are represented as mean $\pm$ one standard-deviation. Bold entries correspond to cases where the true (classic) theoretical-best value is within two estimation standard deviations of the mean estimated (generalized) theoretical-best.}
\label{tab:summary}
\end{table*}

\textbf{Lean Model Building Experiments}: Good early-termination should result in low-regret, and low opportunity cost.  Regret is the percentage of models that were terminated that would have had a test performance higher than the estimated theoretical-best. The opportunity cost is the reduction in resource consumption that we would have incurred had we used early-termination.  If the estimated theoretical-best overshoots, the regret will be low but the opportunity cost will be high. If the estimated theoretical-best undershoots, the regret will be high, but the opportunity cost will be low.  A good data valuation estimation provides a good trade-off between regret and opportunity cost. 

To illustrate this tradeoff, we simulated applying early-termination in $100$ experiments on a the Don't Overfit ii Kaggle experiment using TensorFlow.  We did an $80$-$20$ split of the data $100$ times and use as model a $20 \times 20 \times 20 \times 20 \times 20 \times 1$ fully-connected neural classifier with ReLu inner layer activation, linear output layer activation, and binary cross-entropy loss. We train the model for $1000$ epochs,  and simulate earl-termination by implementing a TensorFlow callback. Termination is triggered when the running accuracy exceeds the estimated theoretical best ($82\%$). In the ex-post analysis, we consider that a model was overfitted when its held-out performance is at least $10\%$ worse than its training performance. Overall, $76\%$ of experiments were overfitted,  all of which would have been stopped by our early-termination rule, resulting in a $74\%$ reduction in runtime (a proxy for compute spent). Additionally, no experiment that did not overfit was stopped, and therefore the regret was null.

\textbf{Model-Free Variable Selection Case Study:} We illustrate the efficacy of our greedy model-free variable selection algorithm on the UCI Bank Note dataset. We first provide an intuitive qualitative analysis, then we verify that our model-free variable selection algorithm is consistent with our findings. The problem consists of determining whether a bank note is a forgery from properties of an image thereof, namely its \emph{entropy}, \emph{kurtosis}, \emph{skewness} and \emph{variance}. All $4$ variables are normalized to take value between $0$ and $1$ to ease illustration. \footnote{To be specific, we apply the transformation $x \to (x-x_{\text{min}})/(x_{\text{max}}-x_{\text{min}})$ to each variable.}

To determine which variable is the most insightful when used by itself to predict the label or, equivalently, the first variable our algorithm should be selecting, we generate a scatter plot of values of each variable color-coded with the type of note, green for authentic notes and red for forgeries. This is illustrated in Figure \ref{fig:var1} where it can be seen that it is visually very hard to differentiate genuine bank notes from forgeries solely using the \emph{entropy} variable. As for the \emph{kurtosis} variable, while a normalized \emph{kurtosis} higher than $0.6$ is a strong indication that the bank note is a forgery, this only happens about $7\%$ of the time. When the \emph{kurtosis} is lower than $0.6$ on the other hand, it is very hard to distinguish genuine notes from forgeries using the \emph{kurtosis} variable alone.  The \emph{skewness} variable is visually more useful than both \emph{kurtosis} and \emph{entropy}, but the \emph{variance} variable is clearly the most insightful explanatory variable. Genuine bank notes tend to have a higher \emph{variance} than forgeries.
\begin{figure}
        \includegraphics[width=0.45\textwidth]{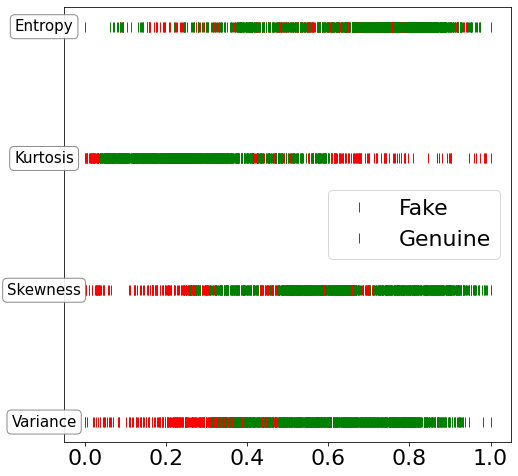} 
        \caption{Scatter plot of explanatory variables in the UCI Bank Note dataset. Values are rescaled to take values in $[0, 1]$ to ease illustration.}
       \label{fig:var1}
\end{figure}

To figure out which of the three remaining explanatory variables would complement the \emph{variance} variable the most, we make three 2D scatter plots with \emph{variance} as the x-axis and the other input as the y-axis and, as we did before, we color dots green (resp. red) when the associated inputs came from a genuine (resp. fake) bank note. Intuitively, the explanatory variable that complements the variance variable the most is the one whose green and red clusters of points are the most distinguishable. The more distinguishable these two clusters are, the more accurately we can predict whether the bank note is a forgery. The more the two collections overlap, the more ambiguous our prediction will be. As it can be seen in Figure \ref{fig:var2}, the explanatory variable that, when used in conjunction with the \emph{variance} variable, separates genuine and fake notes the most is \emph{skewness}.
\begin{figure}
        \includegraphics[width=0.48\textwidth]{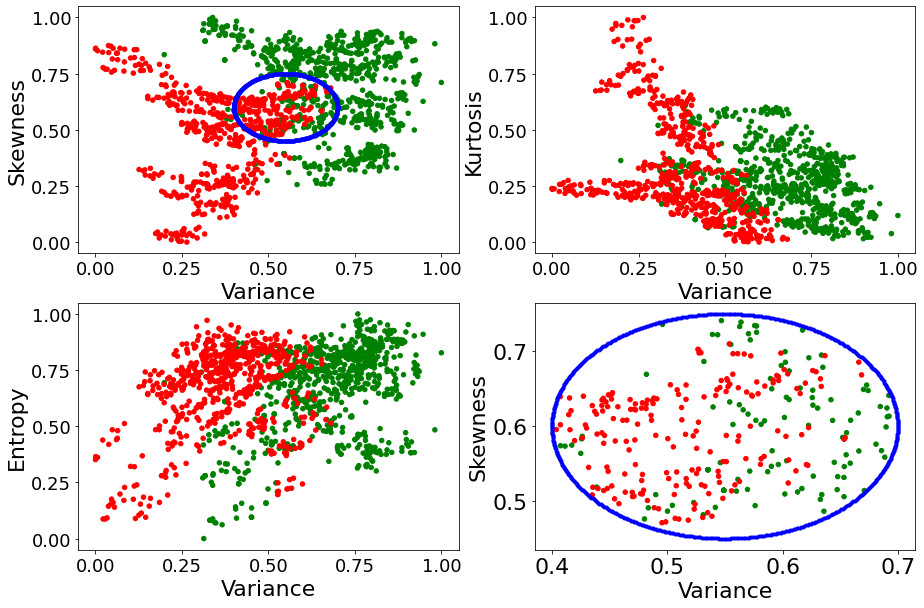} 
        \caption{2D scatter plot of the variance (x-axis) against each other explanatory variable (y-axis) in the UCI Bank Note dataset.  Values are rescaled to take values in $[0, 1]$ to ease illustration. The bottom-right plot is a zoomed-in version of the top-left plot around the blue ellipse.}
       \label{fig:var2}
\end{figure}

To qualitatively determine which of \emph{entropy} or \emph{kurtosis} would complement the pair (\emph{variance,} \emph{skewness}) the most, we identity values of the pair (\emph{variance}, \emph{skewness}) that are jointly inconclusive about whether the bank note is a forgery. This is the region of the \emph{variance} x \emph{skewness} plane where green dots and red dots overlap. We have crudely identified this region in the top-left plot in Figure \ref{fig:var2} with a blue ellipse, a zoomed-in version thereof is displayed in the bottom-right plot. We then attempt to determine which of \emph{entropy} and \emph{kurtosis} can best help alleviate the ambiguity inherent to that region. To do so, we consider all the bank notes that fall within the blue ellipse above, and we plot them on the four planes \emph{variance} x \emph{kurtosis}, \emph{variance} x \emph{entropy}, \emph{skewness} x \emph{kurtosis}, and \emph{skewness} x \emph{entropy}, in an attempt to figure out at a glance how much ambiguity we can remove by knowing the value of the \emph{entropy} or \emph{kurtosis} variable. This is illustrated in Figure \ref{fig:var3}, where it can be seen that the addition of the \emph{kurtosis} explanatory variable is sufficient to classify all bank notes almost perfectly, while the \emph{entropy} variable is not sufficient to remove all ambiguity.
\begin{figure}
        \includegraphics[width=0.48\textwidth]{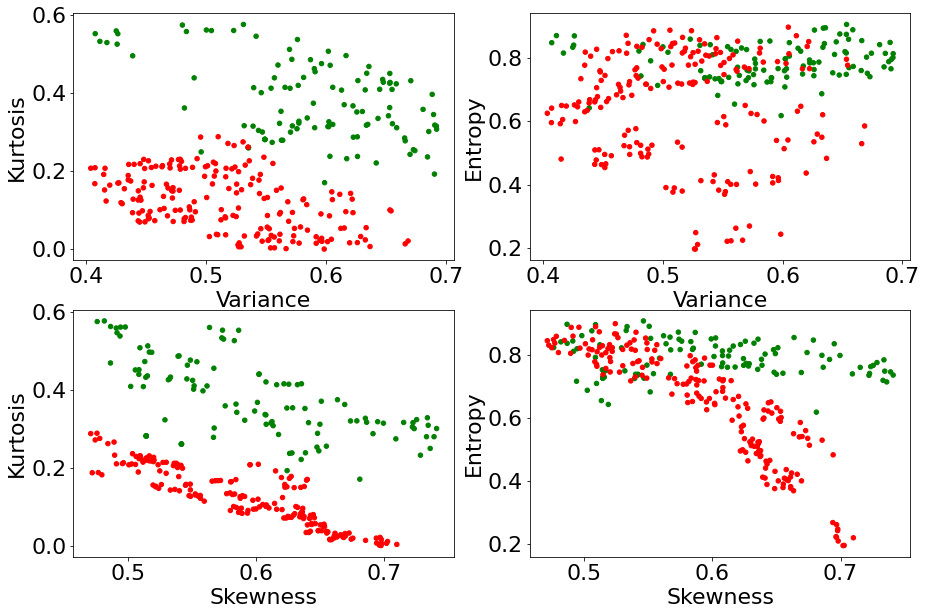} 
        \caption{2D scatter plots of bank notes that fall in the ambiguity ellipse of Figure \ref{fig:var2} --- i.e. that can hardly be classified as genuine or fake using Variance and Skewness alone. The x-axis is either Variance or Skewness and the y-axis is either Entropy or Kurtosis.}
       \label{fig:var3}
\end{figure}

To recap,  our greedy model-free variable selection algorithm applied to the UCI Bank Note dataset should first select \emph{variance} as the most insightful explanatory variable, then \emph{skewness} as the explanatory variable that complements \emph{variance} the most, and finally \emph{kurtosis}. \emph{Entropy} doesn't add much value to the other $3$,  and using the other \emph{variance}, \emph{skewness} and \emph{kurtosis}, we can achieve perfect accuracy. This is indeed what our greedy model-free variable selection does, as illustrated in Table \ref{tab:vs_bn}.

\begin{table*}[!ht]
\centering
\resizebox{0.7\textwidth}{!}{
\begin{tabular}{rlll}
\toprule
 Selection Order &      Variable & Running Achievable $R^2$ & Running Achievable Accuracy \\
\midrule
               1 &   Variance &                           0.51 &                  0.90 \\
               2 &  Skewness &                         0.58 &                  0.93 \\
               3 &  Kurtosis &                            0.75 &                  1.00\\
               4 &  Entropy &                             0.75&                  1.00 \\
\bottomrule
\end{tabular}}
\caption{Greedy model-free variable selection based on theoretical-best performance achievable, and applied to the UCI Bank Note dataset.}
\label{tab:vs_bn}
\end{table*}

We further illustrate our greedy model-free variable selection algorithm on a regression problem with a much larger set of explanatory variables, namely the Kaggle house price advanced regression dataset. This dataset has $80$ explanatory variables, almost evenly split between categorical and continuous variables.  Results are presented in Table \ref{tab:vs_house} in the Appendix, where it can be seen that the relative importance of the top-20 and bottom-20 variables selected makes intuitive sense.

\textbf{Lean Model Improvement Experiments:} Attempts to improve a production model can be grouped into two categories: model-driven attempts and data-driven attempts. Model-driven attempts aim at improving the production model by looking for a model using the \emph{same} explanatory variables, but that has a better fit (i.e. that approximates the true conditional distribution $P_{\bm{y} \vert \bm{x}}$ better than the production model does). Data-driven attempts aim at boosting the performance of the production model by looking for new and complementary explanatory variables from which new insights could be generated.

Consistent with the LeanML design pattern, prior to any data-driven attempt at improving a production model, it is crucial to quantify the highest performance boost that the new set of explanatory variables can bring about. It might be counter-intuitive, but explanatory variables that may boost the performance of a production model are not necessarily directly informative about the business outcome itself; in fact they can be independent from the business outcome.  Good candidates should be informative about the business outcome \emph{conditional on existing explanatory variables}. To illustrate this point, let us consider the regression generative model $y = ix_1 + (1-i)x_0$, where $x_1$ and $x_0$ are i.i.d. and $i$ is a Bernoulli variable independent from both $x_1$ and $x_0$.  It is easy to see that $i$ and $y$ are independent, as $P_{y \vert i = 1} = P_{y \vert i = 0} = P_{x_1} = P_{x_0}=P_y$. As such, $i$ contains no insight about $y$.  Additionally, knowing $x_1$ and $x_0$ helps predict $y$, but $y$ cannot be predicted perfectly using $x_1$ and $x_0$ alone. However, once we know $x_1$ and $x_0$, using $i$ as explanatory variable allows us to predict $y$ perfectly.  Thus, being informative about a business outcome should not be a requirement for explanatory variables to use to improve a production model in a data-driven fashion. 

Similarly, because a new explanatory variable is highly informative about the business outcome of interest, does not mean it should be used to improve a production model: the new explanatory variable may very well be redundant with respect to the explanatory variables used to train the production model. To illustrate this, we estimate the highest performance achievable in the previous UCI Bank Note experiment without the \emph{variance} explanatory variable, which we recall we previously found to be the variable that was the most insightful about the business outcome to predict (when used by itself). We find that the outcome can be predicted with a $99\%$ accuracy, even without the \emph{variance} explanatory variable (i.e. using \emph{skewness}, \emph{kurtosis} and \emph{entropy}). Table \ref{tab:vs_bn_novar} contains the result of our model-free variable selection algorithm applied to all explanatory variables but \emph{variance}.

No matter the number of explanatory variables or features a production model was trained with, no matter the number of newly available explanatory variables or features,  by substracting the theoretical-best performances achievable using the old set of explanatory variables or features from the theoretical-best performances achievable using the old and new sets combined, we get the highest performance boost the new set of explanatory variables or features may bring about.

\begin{table*}[!ht]
\centering
\resizebox{0.7\textwidth}{!}{
\begin{tabular}{rlll}
\toprule
 Selection Order &      Variable & Running Achievable $R^2$ & Running Achievable Accuracy \\
\midrule
               1 &  Skewness &                         0.38 &                  0.83 \\
               2 &  Entropy &                            0.45 &                  0.87 \\
               3 &  Kurtosis &                           0.74 &                  0.99 \\
\bottomrule
\end{tabular}}
\caption{Greedy model-free variable selection based on theoretical-best performance achievable, and applied to the UCI Bank Note dataset excluding the variance explanatory variable..}
\label{tab:vs_bn_novar}
\end{table*}

As for model-driven attempts at improving a production model, to determine the feasibility of such endeavors,  we may simply compare the performance of the production model out-of-sample to the estimated theoretical-best. A production model can be improved in a purely model-driven fashion if and only if its performance is smaller than the theoretical-best, and the gap between the two, which we refer to as the sub-optimality gap, is the performance boost we stand to gain by simply looking for better models. Prior to such model-driven attempts, data scientists should first quantify the sub-optimality gap, and question whether the potential business impact outweighs the resources needed to look for better models.

\section{Conclusion}
We provide a design pattern for machine learning projects which we refer to as the LeanML design pattern. The LeanML design pattern is a framework for structuring predictive modeling projects that empowers data scientists to slash avoidable wastes of time and compute resources. The LeanML design pattern implements two very intuitive key principles, which we refer to as the LeanML principles, namely that: one should always condition the running of a machine learning experiment on estimating its feasibility, and one should always pro-actively terminate an experiment one started as soon as one can reliably determine it will fail.  What enables LeanML is the realization that it is possible to estimate the best performance one may achieve when predicting an outcome $\displaystyle \vy \in \mathcal{Y}$ using a given set of explanatory variables $\displaystyle \vx \in \mathcal{X}$ for a wide range of metrics, without training any predictive model, and that doing so is in fact easier, faster, and cheaper than learning the best predictive model. We provide theoretical results expressing the theoretical-best $R^2$, MSE, classification accuracy and log-likelihood per observation, as a function of the mutual information $I\left(\bm{y}; \bm{x}\right)$ and (occasionally) a measure of the variability of $\bm{y}$. We illustrate the efficacy of LeanML on a wide range of synthetic and real-life experiments.

\textbf{Code:} The LeanML design pattern may be seamlessly implemented using the Function-As-A-Service product KXY. KXY is accessible through the \textbf{kxy} Python package on PyPi (\textbf{pip install kxy}) or GitHub (\url{https://github.com/kxytechnologies/kxy-python}), or through the KXY REST API.  The product is free for academic use.  

Experiments in this paper may be reproduced using the GitHub repo \url{https://github.com/kxytechnologies/kxy-datasets}.

\newpage

\bibliography{lean_ml}
\bibliographystyle{lean_ml}
\newpage

\appendix

\begin{table*}[t]
\centering
\resizebox{\textwidth}{!}{
\begin{tabular}{ c |c |c }
                               & $y$ \bf{ is continuous} & $y$ \bf{is categorical} \\ 
\hline 
$\displaystyle \vx$ \bf{is continuous} & $I(y; \displaystyle \vx) = h(y) + h(\displaystyle \vx) - h(y; \displaystyle \vx)$ & $I(y; \displaystyle \vx) = h(\displaystyle \vx) - \sum_{i\in \mathcal{Y}} h(\displaystyle \vx | y=i)P_y(i)$ \\  
\hline 
$\displaystyle \vx$ \bf{is categorical} & $I(y; \displaystyle \vx) = h(y) - \sum_{i\in \mathcal{X}} h(y | \displaystyle \vx=i)P_{\displaystyle \vx}(i)$ & $I(y; \displaystyle \vx) = H(y) + H(\displaystyle \vx) - H(y; \displaystyle \vx)$ \\
\hline 
\vtop{\hbox{\strut $\displaystyle \vx$ \bf{has continous  coordinates} $\displaystyle \vx_c$}\hbox{\strut \bf{and categorical coordinates} $\displaystyle \vx_d$}}& $I(y; \displaystyle \vx) = h(y) + \sum_{i \in \mathcal{X}_d} \left[h(\displaystyle \vx_c | \displaystyle \vx_d = i) -  h\left(y, \displaystyle \vx_c | \displaystyle \vx_d = i\right)\right]P_{\displaystyle \vx_d}(i)$ & $I(y; \displaystyle \vx) = I(y; \displaystyle \vx_d) + \sum_{i \in \mathcal{X}_d} P_{\displaystyle \vx_d}(i) h(\displaystyle \vx_c | \displaystyle \vx_d = i) -  \sum_{j \in \mathcal{Y}} h\left(\displaystyle \vx_c | \displaystyle \vx_d = i, y=j\right)P_{\displaystyle \vx_d, y}(i, j)$ 
\end{tabular}}
\caption{Expression of the mutual information $I(y; \displaystyle \vx)$ as a function of the Shannon entropy $H(.)$, and/or the differential entropy $h(.)$, depending on whether $y$ and/or $\displaystyle \vx$ has continuous and/or categorical coordinates. Expressions of the type $h(\displaystyle \vx|y=i)$ are to be understood as the differential entropy of the continuous conditional distribution $\displaystyle \vx|y=i$.}
\label{tab:mutual_information}
\end{table*}

\begin{table*}[!ht]
\centering
\resizebox{0.7\textwidth}{!}{
\begin{tabular}{lcccc}
\hline
\hline
Exact $R^2$   & $f_1$ & $f_2$ & $f_3$ & $f_4$  \\
\hline
$d=1$               &  &  &  &   \\
\hline
$0.99$ &  $\bm{0.99 \pm 0.00}$ &  $\bm{0.98 \pm 0.00}$ &  $\bm{0.98 \pm 0.01}$ &  $0.97 \pm 0.00$ \\ 
$0.75$ &  $\bm{0.75 \pm 0.01}$ &  $\bm{0.74 \pm 0.01}$ &  $\bm{0.74 \pm 0.01}$ &  $\bm{0.74 \pm 0.01}$ \\ 
$0.50$ &  $\bm{0.51 \pm 0.02}$ &  $\bm{0.51 \pm 0.01}$ &  $\bm{0.52 \pm 0.02}$ &  $\bm{0.53 \pm 0.01}$ \\ 
$0.25$ &  $\bm{0.28 \pm 0.02}$ &  $\bm{0.29 \pm 0.02}$ &  $\bm{0.28 \pm 0.02}$ &  $\bm{0.27 \pm 0.03}$ \\ 
\hline
$d=2$               &  &  &  &   \\
\hline
$0.99$ &  $\bm{0.99 \pm 0.00}$ &  $\bm{0.99 \pm 0.00}$ &  $0.97 \pm 0.00$ &  $0.98 \pm 0.00$ \\ 
$0.75$ &  $\bm{0.75 \pm 0.01}$ &  $\bm{0.75 \pm 0.01}$ &  $\bm{0.71 \pm 0.02}$ &  $\bm{0.74 \pm 0.01}$ \\ 
$0.50$ &  $\bm{0.52 \pm 0.03}$ &  $\bm{0.52 \pm 0.02}$ &  $\bm{0.48 \pm 0.02}$ &  $\bm{0.51 \pm 0.02}$ \\ 
$0.25$ &  $\bm{0.29 \pm 0.02}$ &  $\bm{0.29 \pm 0.02}$ &  $\bm{0.27 \pm 0.02}$ &  $\bm{0.31 \pm 0.04}$ \\
\hline
$d=5$               &  &  &  &   \\
\hline
$0.99$ &  $\bm{0.99 \pm 0.00}$ &  $\bm{0.99 \pm 0.00}$ &  $0.96 \pm 0.01$ &  $0.98 \pm 0.00$ \\ 
$0.75$ &  $\bm{0.73 \pm 0.01}$ &  $\bm{0.73 \pm 0.01}$ &  $0.64 \pm 0.04$ &  $\bm{0.72 \pm 0.01}$ \\ 
$0.50$ &  $\bm{0.47 \pm 0.01}$ &  $\bm{0.47 \pm 0.01}$ &  $\bm{0.44 \pm 0.03}$ &  $\bm{0.48 \pm 0.02}$ \\ 
$0.25$ &  $\bm{0.25 \pm 0.01}$ &  $\bm{0.23 \pm 0.01}$ &  $0.21 \pm 0.01$ &  $\bm{0.24 \pm 0.01}$ \\ 
\hline
$d=10$               &  &  &  &   \\
\hline
$0.99$ &  $\bm{0.99 \pm 0.00}$ &  $\bm{0.99 \pm 0.00}$ &  $0.95 \pm 0.00$ &  $0.98 \pm 0.00$ \\ 
$0.75$ &  $\bm{0.73 \pm 0.01}$ &  $\bm{0.72 \pm 0.01}$ &  $0.64 \pm 0.02$ &  $\bm{0.73 \pm 0.01}$ \\ 
$0.50$ &  $\bm{0.49 \pm 0.01}$ &  $\bm{0.47 \pm 0.02}$ &  $0.41 \pm 0.01$ &  $\bm{0.48 \pm 0.01}$ \\ 
$0.25$ &  $\bm{0.25 \pm 0.01}$ &  $\bm{0.24 \pm 0.01}$ &  $\bm{0.21 \pm 0.02}$ &  $\bm{0.25 \pm 0.02}$ \\ 
\hline
\end{tabular}}
\caption{Comparison between true theoretical-best (classic) regression $R^2$ and estimated theoretical-best (generalized) regression $R^2$, as described in Section \ref{sct:app} for various values of $d$. Estimated $R^2$ are represented as mean $\pm$ one standard-deviation. Bold entries correspond to cases where the true (classic) theoretical-best value is within two estimation standard deviations of the mean estimated (generalized) theoretical-best.}
\label{tab:full_rsq}
\end{table*}

\begin{table*}[!ht]
\centering
\resizebox{0.7\textwidth}{!}{
\begin{tabular}{lcccc}
\hline
\hline
Exact RMSE  & $f_1$ & $f_2$ & $f_3$ & $f_4$  \\
\hline
$d=1$               &  &  &  &   \\
\hline
$0.10$ &  $0.12 \pm 0.00$ &  $0.13 \pm 0.01$ &  $\bm{0.14 \pm 0.03}$ &  $0.18 \pm 0.01$ \\ 
$0.58$ &  $\bm{0.58 \pm 0.01}$ &  $\bm{0.58 \pm 0.01}$ &  $\bm{0.59 \pm 0.02}$ &  $\bm{0.59 \pm 0.01}$ \\ 
$1.00$ &  $\bm{0.98 \pm 0.02}$ &  $\bm{0.99 \pm 0.02}$ &  $\bm{0.98 \pm 0.02}$ &  $\bm{0.98 \pm 0.02}$ \\ 
$1.73$ &  $\bm{1.68 \pm 0.02}$ &  $\bm{1.69 \pm 0.04}$ &  $\bm{1.71 \pm 0.03}$ &  $\bm{1.69 \pm 0.05}$ \\ 
\hline
$d=2$               &  &  &  &   \\
\hline
$0.10$ &  $\bm{0.11 \pm 0.00}$ &  $\bm{0.11 \pm 0.00}$ &  $0.18 \pm 0.01$ &  $0.15 \pm 0.00$ \\ 
$0.58$ &  $\bm{0.58 \pm 0.01}$ &  $\bm{0.58 \pm 0.01}$ &  $\bm{0.62 \pm 0.02}$ &  $\bm{0.59 \pm 0.01}$ \\ 
$1.00$ &  $\bm{0.98 \pm 0.03}$ &  $\bm{0.98 \pm 0.02}$ &  $\bm{1.03 \pm 0.01}$ &  $\bm{0.98 \pm 0.02}$ \\ 
$1.73$ &  $\bm{1.69 \pm 0.03}$ &  $\bm{1.68 \pm 0.04}$ &  $\bm{1.70 \pm 0.02}$ &  $\bm{1.67 \pm 0.04}$ \\ 
\hline
$d=5$               &  &  &  &   \\
\hline
$0.10$ &  $\bm{0.11 \pm 0.00}$ &  $0.12 \pm 0.00$ &  $0.20 \pm 0.01$ &  $0.13 \pm 0.00$ \\ 
$0.58$ &  $\bm{0.60 \pm 0.01}$ &  $\bm{0.60 \pm 0.01}$ &  $0.69 \pm 0.04$ &  $\bm{0.61 \pm 0.01}$ \\ 
$1.00$ &  $\bm{1.02 \pm 0.01}$ &  $\bm{1.03 \pm 0.01}$ &  $\bm{1.05 \pm 0.03}$ &  $\bm{1.02 \pm 0.02}$ \\ 
$1.73$ &  $\bm{1.74 \pm 0.02}$ &  $1.76 \pm 0.01$ &  $\bm{1.78 \pm 0.03}$ &  $\bm{1.74 \pm 0.01}$ \\ 
\hline
$d=10$               &  &  &  &   \\
\hline
$0.10$ &  $\bm{0.11 \pm 0.00}$ &  $0.12 \pm 0.00$ &  $0.22 \pm 0.01$ &  $0.13 \pm 0.00$ \\ 
$0.58$ &  $\bm{0.60 \pm 0.01}$ &  $\bm{0.61 \pm 0.01}$ &  $0.69 \pm 0.02$ &  $0.60 \pm 0.00$ \\ 
$1.00$ &  $\bm{1.02 \pm 0.01}$ &  $\bm{1.03 \pm 0.02}$ &  $1.08 \pm 0.01$ &  $\bm{1.01 \pm 0.01}$ \\ 
$1.73$ &  $\bm{1.74 \pm 0.02}$ &  $\bm{1.75 \pm 0.02}$ &  $1.77 \pm 0.01$ &  $\bm{1.73 \pm 0.03}$ \\ 
\hline
\end{tabular}}
\caption{Comparison between true theoretical-best (classic) RMSE and estimated theoretical-best (generalized) RMSE, as described in Section \ref{sct:app} for various values of $d$. Estimated RMSE are represented as mean $\pm$ one standard-deviation. Bold entries correspond to cases where the true (classic) theoretical-best value is within two estimation standard deviations of the mean estimated (generalized) theoretical-best. }
\label{tab:full_rmse}
\end{table*}

\begin{table*}[!ht]
\centering
\resizebox{0.7\textwidth}{!}{
\begin{tabular}{lcccc}
\hline
\hline
Exact  & $f_1$ & $f_2$ & $f_3$ & $f_4$  \\
\hline
$d=1$               &  &  &  &   \\
\hline
$1.00$ &  $0.98 \pm 0.00$ &  $\bm{1.00 \pm 0.00}$ &  $\bm{1.00 \pm 0.00}$ &  $0.98 \pm 0.00$ \\ 
$0.99$ &  $\bm{0.98 \pm 0.02}$ &  $\bm{0.99 \pm 0.01}$ &  $\bm{0.98 \pm 0.02}$ &  $\bm{0.97 \pm 0.02}$ \\ 
$0.75$ &  $\bm{0.74 \pm 0.03}$ &  $\bm{0.77 \pm 0.02}$ &  $\bm{0.76 \pm 0.03}$ &  $\bm{0.74 \pm 0.02}$ \\ 
$0.50$ &  $\bm{0.52 \pm 0.01}$ &  $\bm{0.51 \pm 0.01}$ &  $\bm{0.52 \pm 0.01}$ &  $0.52 \pm 0.01$ \\ 
\hline
$d=2$               &  &  &  &   \\
\hline
$1.00$ &  $\bm{1.00 \pm 0.00}$ &  $\bm{1.00 \pm 0.00}$ &  $\bm{1.00 \pm 0.00}$  &  $\bm{1.00 \pm 0.00}$ \\ 
$0.99$ &  $1.00 \pm 0.00$          &  $\bm{0.99 \pm 0.00}$ &  $\bm{1.00 \pm 0.00}$ &  $\bm{1.00 \pm 0.00}$ \\ 
$0.75$ &  $\bm{0.74 \pm 0.03}$ &  $\bm{0.77 \pm 0.03}$ &  $\bm{0.75 \pm 0.02}$ &  $\bm{0.73 \pm 0.02}$ \\ 
$0.50$ &  $\bm{0.52 \pm 0.01}$ &  $\bm{0.53 \pm 0.02}$ &  $0.52 \pm 0.01$          &  $0.52 \pm 0.01$ \\ 
\hline
$d=5$               &  &  &  &   \\
\hline
$1.00$ &  $\bm{1.00 \pm 0.00}$ &  $\bm{1.00 \pm 0.00}$ &  $\bm{1.00 \pm 0.00}$ &  $\bm{1.00 \pm 0.00}$ \\ 
$0.99$ &  $\bm{0.99 \pm 0.01}$ &  $\bm{0.99 \pm 0.01}$ &  $\bm{1.00 \pm 0.00}$ &  $\bm{0.99 \pm 0.00}$ \\ 
$0.75$ &  $\bm{0.76 \pm 0.02}$ &  $\bm{0.72 \pm 0.03}$ &  $\bm{0.76 \pm 0.03}$ &  $\bm{0.75 \pm 0.03}$ \\ 
$0.50$ &  $0.55 \pm 0.02$ &  $\bm{0.54 \pm 0.03}$ &  $0.56 \pm 0.03$ &  $\bm{0.54 \pm 0.02}$ \\ 
\hline
$d=10$               &  &  &  &   \\
\hline
$1.00$ &  $\bm{0.99 \pm 0.00}$ &  $0.96 \pm 0.00$ &  $\bm{0.99 \pm 0.00}$ &  $\bm{0.99 \pm 0.00}$ \\ 
$0.99$ &  $\bm{0.97 \pm 0.03}$ &  $\bm{0.90 \pm 0.10}$ &  $\bm{0.98 \pm 0.00}$ &  $\bm{0.98 \pm 0.00}$ \\ 
$0.75$ &  $\bm{0.74 \pm 0.03}$ &  $\bm{0.67 \pm 0.05}$ &  $\bm{0.73 \pm 0.02}$ &  $\bm{0.74 \pm 0.02}$ \\ 
$0.50$ &  $0.57 \pm 0.03$ &  $\bm{0.55 \pm 0.03}$ &  $0.54 \pm 0.02$ &  $\bm{0.55 \pm 0.03}$ \\ 
\hline
\end{tabular}}
\caption{Comparison between true theoretical-best classification accuracy and estimated theoretical-best accuracy, as described in Section \ref{sct:app} for various values of $d$, when the uniform-informativeness condition is met. Estimated accuracies are represented as mean $\pm$ one standard-deviation. Bold entries correspond to cases where the true theoretical-best value is within two estimation standard deviations of the mean estimated theoretical-best.}
\label{tab:full_acc}
\end{table*}

\begin{table*}[!ht]
\centering
\resizebox{0.7\textwidth}{!}{
\begin{tabular}{rlll}
\toprule
 Selection Order &      Variable & Running Achievable $R^2$ & Running Achievable RMSE \\
\midrule

0               &    No Variable &                         0.00 &                7.94e+04 \\
1               &    OverallQual &                         0.65 &                4.70e+04 \\
2               &      GrLivArea &                         0.78 &                3.70e+04 \\
3               &      YearBuilt &                         0.84 &                3.17e+04 \\
4               &    TotalBsmtSF &                         0.85 &                3.12e+04 \\
5               &    OverallCond &                         0.85 &                3.08e+04 \\
6               &       MSZoning &                         0.85 &                3.08e+04 \\
7               &      BsmtUnfSF &                         0.85 &                3.08e+04 \\
8               &        LotArea &                         0.85 &                3.08e+04 \\
9               &     GarageCars &                         0.85 &                3.08e+04 \\
10              &     Fireplaces &                         0.85 &                3.03e+04 \\
11              &   GarageFinish &                         0.85 &                3.03e+04 \\
12              &   KitchenAbvGr &                         0.85 &                3.03e+04 \\
13              &  SaleCondition &                         0.85 &                3.03e+04 \\
14              &   Neighborhood &                         0.86 &                3.01e+04 \\
15              &         MoSold &                         0.86 &                3.00e+04 \\
16              &       2ndFlrSF &                         0.86 &                2.98e+04 \\
17              &      LandSlope &                         0.90 &                2.46e+04 \\
18              &     Foundation &                         0.93 &                2.03e+04 \\
19              &     BsmtFinSF1 &                         0.96 &                1.67e+04 \\
20              &          Alley &                         0.96 &                1.67e+04 \\

          \dots &        \dots &                         \dots &          \dots \\
          
60              &  BsmtFinType1 &                         1.00 &                1.75e+03 \\
61              &   MiscFeature &                         1.00 &                1.75e+03 \\
62              &    CentralAir &                         1.00 &                1.75e+03 \\
63              &      BldgType &                         1.00 &                1.75e+03 \\
64              &    GarageCond &                         1.00 &                1.75e+03 \\
65              &        YrSold &                         1.00 &                1.75e+03 \\
66              &        PoolQC &                         1.00 &                1.75e+03 \\
67              &      PoolArea &                         1.00 &                1.75e+03 \\
68              &     ExterQual &                         1.00 &                1.75e+03 \\
69              &      BsmtCond &                         1.00 &                1.75e+03 \\
70              &    MasVnrType &                         1.00 &                1.75e+03 \\
71              &      LotShape &                         1.00 &                1.75e+03 \\
72              &       Heating &                         1.00 &                1.75e+03 \\
73              &    MasVnrArea &                         1.00 &                1.75e+03 \\
74              &  BsmtExposure &                         1.00 &                1.75e+03 \\
75              &  BsmtFullBath &                         1.00 &                1.75e+03 \\
76              &        Street &                         1.00 &                1.75e+03 \\
77              &         Fence &                         1.00 &                1.75e+03 \\
78              &  TotRmsAbvGrd &                         1.00 &                1.75e+03 \\
79              &     3SsnPorch &                         1.00 &                1.75e+03 \\

\bottomrule
\end{tabular}}
\caption{Greedy model-free variable selection based on theoretical-best performance achievable, and applied to the Kaggle house prices advanced regression techniques dataset. Illustrated are the top-$20$ and bottom-$20$ variables selected.}
\label{tab:vs_house}
\end{table*}

\section{Mutual Information Estimation: Relation Between the MIND, MINE and NWJ Estimators}
\label{sct:mi_discussion}
The fundamental limitation of MINE (\cite{belghazi2018mutual}) and NWJ (\cite{nguyen2010estimating}) as mutual information estimators is that, by assuming that we can reliably estimate expectations of the form $E\left[T(\bm{y}, \bm{x}) \right]$ from our data for any function $T$, they implicitly assume that we have enough data to fully characterize the joint distribution $P_{\bm{y}, \bm{x}}$. The same can be said of the CPC model of \cite{oord2018representation}.

This is problematic because the mutual information itself is only a loose property of the joint distribution. For instance, the mutual information does not depend on marginal distributions, it is invariant by 1-to-1 transformations, and the same mutual information value can be accounted for by a large number of copula distributions. In order for us to reliably estimate all expectations of the form $E\left[T(\bm{y}, \bm{x}) \right]$ and $E\left[e^{T(\bm{y}, \bm{x})} \right]$, as required by NWJ and MINE,  we need an excessively large sample size to achieve a reasonably small variance.

Fortunately, both follow MINE and NWJ are based on variational characterizations of the KL divergence between two distributions. MINE uses the Donsker-Varadhan bound (\cite{donsker1975asymptotic}) $$KL\left(P \vert \vert Q\right) =  \sup_{T \in L^\infty (Q)} E_P(T) - \log E_Q\left( e^T\right)$$ and \cite{nguyen2010estimating} proposed their own bound $$KL\left(P \vert \vert Q\right) = \sup_{T \in L^\infty (Q)}  E_P\left( T\right) - E_Q\left( e^{T-1}\right).$$ Rather than follow \cite{belghazi2018mutual} and \cite{nguyen2010estimating} and directly estimate the mutual information in the primal space as $$I\left(\bm{y}; \bm{x} \right) = KL\left( P_{\bm{y}, \bm{x}} \vert \vert  P_{\bm{y}} \otimes P_{\bm{x}} \right),$$ we may estimate the mutual information in the copula-uniform dual space\footnote{i.e. the image of the primal/input space by the probability integral transform.} by noting that $$I\left(\bm{y}; \bm{x}\right) = h(\bm{u}_{\bm{y}}) + h(\bm{u}_{\bm{x}}) - h(\bm{u}_{\bm{y}}, \bm{u}_{\bm{x}}), $$and that a copula entropy $h(\bm{u}_{\bm{z}})$ is nothing but the opposite of the KL-divergence between the copula distribution of $\bm{z}$ and the standard uniform distribution: $$h\left(\bm{u}_{\bm{z}} \right) = -KL\left( P_{\bm{u}_{\bm{z}}}  \vert \vert U \right).$$ We may then use the variational characterizations above to estimate copula entropies. 

In practice, $T$ is taken in a parametric space of functions,  $T_\theta \in \mathcal{T}_{\Theta}$, and the copula-entropy estimators read
$$ h_{DV}\left( \bm{u}_{\bm{z}} \right) = \inf_{\theta \in \Theta} -E\left( T_{\theta} \left( \bm{u}_{\bm{z}} \right) \right) + \log \int_{[0, 1]^d} e^{T_\theta(\bm{u})} d\bm{u}$$
and
$$h_{NWJ}\left( \bm{u}_{\bm{z}} \right) = \inf_{\theta \in \Theta} -E\left( T_{\theta}\left( \bm{u}_{\bm{z}} \right) \right) + \int_{[0, 1]^d} e^{T_\theta(\bm{u})-1} d\bm{u}.$$
A direct consequence of the results in \cite{pmlr-v130-kom-samo21a} is that, if $\mathcal{T}_{\Theta}$ is a finite dimensional RKHS with feature map $\phi$ containing an intercept term (i.e. $T_\theta \left( \bm{u} \right) = \theta_0 + \theta^T \phi\left( \bm{u} \right)$),  then $h_{DV}$ and $h_{NWJ}$ are the same, and are the unique solution to the MIND maximum-entropy problem
\begin{align}
\begin{cases}
\underset{P \in \mathcal{D}_d}{\max} ~~~ h\left( P \right)  \\
\text{s.t.} ~ E_P\left[ \phi \left(\displaystyle \vu \right)  \right] = E_{P_{\bm{u}_{\bm{z}}}}\left[ \phi \left(\displaystyle \vu \right)  \right]  \nonumber
\end{cases}.
\end{align}

This is the case for instance when $\mathcal{T}_{\Theta}$ is a neural network whose final layer is linear with an intercept term, and all other layer parameters are frozen. Note that, in this finite-dimensional RKHS case, the copula entropy estimator depends on the data distribution solely through the expectation $E_{P_{\bm{u}_{\bm{z}}}}\left[ \phi \left(\displaystyle \vu \right)  \right]$, which only needs to be evaluated once. Typically, $\phi$ would be chosen so that we may reliably estimate this expectation from the amount of data available.

Back to our neural network example, when none of the layers are frozen, both $h_{DV}$ and $h_{NWJ}$ are solutions to the minimax entropy copula problem
\begin{align}
\min_{\gamma \in \Gamma} ~~
\begin{cases}
\underset{P \in \mathcal{D}_d}{\max} ~~~ h\left( P \right)  \\
\text{s.t.} ~ E_P\left[ \phi_\gamma \left(\displaystyle \vu \right)  \right] = E_{P_{\bm{u}_{\bm{z}}}}\left[ \phi_\gamma \left(\displaystyle \vu \right)  \right]  \nonumber
\end{cases},
\end{align}
where $\gamma$ represents inner layers parameters. 

This time we need to estimate $E_{P_{\bm{u}_{\bm{z}}}}\left[ \phi_\gamma \left(\displaystyle \vu \right)  \right]$ from the data for as many inner layers parameters $\gamma$ as needed, which is far less data-efficient than MIND. Nonetheless, even this deductive twist to MIND would still be more data-efficient than MINE and NWJ in the primal space, as it would implicitly assume that we have enough data to fully characterize the copula distribution of $\left(\bm{y}, \bm{x}\right)$, but not its marginal distributions, whereas MINE and NWJ (in the primal space) require us to have enough data to be able to characterize the full joint distribution $P_{\bm{y}, \bm{x}}$ (i.e. all its marginals and its copula).

In the spirit of the LeanML design pattern, we stress that a surgical search for the best MIND statistics functions $\phi_\gamma$, such as by using gradient-descent, might not be necessary, and can be wasteful. Corollary 3.1 in \cite{pmlr-v130-kom-samo21a} provides that errors made estimating $h\left(\bm{u}_{\bm{y}} \right) + h\left(\bm{u}_{\bm{x}} \right)$ can cancel out errors made estimating $h\left( \bm{u}_{\bm{y}},   \bm{u}_{\bm{x}} \right)$, so that we may estimate the mutual information with high accuracy using MIND, even when some copula entropies weren't estimated as well. 

When data do not abound, we are better off choosing the statistics function $\phi$ so that i) $E_{P_{\bm{u}_{\bm{z}}}}\left[ \phi \left(\displaystyle \vu \right)  \right]$ reveals associations between coordinates of $\bm{z}$, and ii) $E_{P_{\bm{u}_{\bm{z}}}}\left[ \phi \left(\displaystyle \vu \right)  \right]$ can be estimated with a low enough variance using the amount of data available.

\section{Proofs}
\subsection{Proof of Proposition \ref{prop:ll}}
\label{proof:prop:ll}
Let $\mathcal{M}$ be a supervised leaner with predictive pmf or pdf $p_\mathcal{M}$. First, $\bar{\mathcal{LL}}\left( P_{\displaystyle \vy, \displaystyle \vx} \right) =\mathcal{LL}\left( \mathcal{M}^\infty \right)$. Second,
\begin{align*}
& \bar{\mathcal{LL}}\left( P_{\displaystyle \vy, \displaystyle \vx} \right)- \mathcal{LL}\left( \mathcal{M} \right) \\
&=  I\left(\displaystyle \vy ; \displaystyle \vx \right)-h\left(\displaystyle \vy\right)- E_{P_{\displaystyle \vy, \displaystyle \vx}} \left[ \log p_\mathcal{M}\left(\displaystyle \vy  \vert \displaystyle \vx  \right) \right] \\
&=  E_{P_{\displaystyle \vy, \displaystyle \vx}} \left[ \log p\left(\displaystyle \vy  \vert \displaystyle \vx  \right) \right]- E_{P_{\displaystyle \vy, \displaystyle \vx}} \left[ \log p_\mathcal{M}\left(\displaystyle \vy  \vert \displaystyle \vx  \right) \right] \\
&=  E_{P_{\displaystyle \vx }} \left[ E_{P_{\displaystyle \vy \vert \displaystyle \vx}} \left[  \log p\left(\displaystyle \vy  \vert \displaystyle \vx  \right) - \log p_\mathcal{M}\left(\displaystyle \vy  \vert \displaystyle \vx  \right) \right] \right] \\
&=E_{P_{\displaystyle \vx}} \left[\text{KL}\left( p\left(\displaystyle \vy  \vert \displaystyle \vx  \right) \vert \vert p_\mathcal{M}\left(\displaystyle \vy  \vert \displaystyle \vx  \right) \right)\right] \\
& \geq 0.
\end{align*}

\subsection{Proof of Proposition \ref{prop:best}}
\label{proof:prop:best}
We decompose the mutual information $I\left( \displaystyle \vy; \displaystyle \vx, \displaystyle \vz \right)$ in two different ways.
\begin{align*}
I\left( \displaystyle \vy; \displaystyle \vx, \displaystyle \vz \right) &= I\left( \displaystyle \vy; \displaystyle \vx \right) + I\left( \displaystyle \vy; \displaystyle \vz \vert \vx \right) \\
&= I\left( \displaystyle \vy; \displaystyle \vz \right) + I\left( \displaystyle \vy; \displaystyle \vx \vert \vz \right)
\end{align*}
Moreover, by definition of the generative graphical model of $\mathcal{M}$, $I\left( \displaystyle \vy; \displaystyle \vz \vert \vx \right) = 0$. Hence, by non-negativity of the mutual information, $I\left( \displaystyle \vy; \displaystyle \vx \right) \geq  I\left( \displaystyle \vy; \displaystyle \vz \right)$, and the equality holds if and only if $I\left( \displaystyle \vy; \displaystyle \vx \vert \vz \right)=0$. This condition is met  by $\mathcal{M}^\infty$.

The inequality $I\left( \displaystyle \vy; \displaystyle \vx \right) \geq  I\left( \displaystyle \vy; \displaystyle \vz \right)$ is known as the data processing inequality (\cite{incover1999elements}).
\begin{align*}
MSE \left(\mathcal{M}\right)  &\geq \mathbb{V}\text{ar} \left( y \right) e^{-2I\left( y  ; \displaystyle \vz \right)}  \\
& \geq \mathbb{V}\text{ar} \left( y \right) e^{-2I\left( y  ; \displaystyle \vx \right)} := \bar{MSE}\left( P_{y, \displaystyle \vx} \right),
\end{align*} where the second inequality stems from an application of the data processing inequality. 

Additionally, $P_{y \vert \displaystyle \vz^0} = P_{y}$ implies $I\left(y; \displaystyle \vz^0 \right) = 0$, and as $\mathcal{M}^0$ is unbiased, we get $MSE\left(\mathcal{M}^0\right) = \mathbb{V}\text{ar}\left( y \right)$. The data processing inequaliy is an equality for $\mathcal{M}^\infty$, and $\mathcal{M}^\infty$ is unbiased as $$E\left[y-z^\infty \right] = E\left[ E\left(y \vert \displaystyle \vx \right) - E\left(z^\infty  \vert \displaystyle \vx \right) \right] = 0.$$
Thus, the lowest MSE is reached by $\mathcal{M}^\infty$.

\subsection{Derivation of the extended strong Fano bound}
\label{sct:fano_deriv}
To prove Theorem \ref{theo:hacc}, we need a series of intermediary results. 

\textbf{Intuition:} Let us consider a classifier $\mathcal{M}$ with generative graphical model $y \rightarrow \displaystyle \vx  \rightarrow z $. As previously discussed, $z \in \mathcal{Y}$ typically represents the knowledge the model extracts about $y$ from  $\displaystyle \vx$. To simplify our illustration, we further restrict $z$ to be our prediction of $y$ after observing $\displaystyle \vx$, so that the accuracy of $\mathcal{M}$ reads $\mathbb{P}\left( y = z \right) := \mathcal{D}\left( \mathcal{M} \right)$. 

If we denote $\Pi$ the set of all (deterministic) permutation of $\{1, \dots, q\}$, then $$\mathbb{P}\left( y = z \right) \leq \underset{\pi \in \Pi}{\max} ~ \mathbb{P}\left( y = \pi(z) \right) := \mathcal{P}\left( \mathcal{M} \right).$$ Noting that
\begin{align*}
\mathbb{P}\left( y = \pi(z) \right) = \sum_{i=1}^q \mathbb{P}\left( y = \pi(i) \vert z = i \right) \mathbb{P}\left( z = i \right),
\end{align*}
and that 
\begin{align*}
\mathcal{P}\left( \mathcal{M} \right) & \leq  \sum_{i=1}^q \underbrace{\underset{\pi \in \Pi}{\max} ~ \mathbb{P}\left( y = \pi(i) \vert z = i \right)}_{\underset{j \in [1, q]}{\max} ~ \mathbb{P}\left( y = j \vert  z  = i \right)} \mathbb{P}\left( z = i \right) \\
& = E_{z} \left[ \underset{i \in [1, q]}{\max} ~ \mathbb{P}\left( y = i \vert  z \right) \right] :=  \mathcal{A}\left( \mathcal{M} \right),
\end{align*}
it follows that
\begin{align}
\label{eq:accuracies}
\mathcal{D}\left( \mathcal{M} \right) \leq  \mathcal{P}\left( \mathcal{M} \right)  \leq \mathcal{A}\left( \mathcal{M} \right).
\end{align}
These three quantities are very important to understand. $\mathcal{D}\left( \mathcal{M} \right)$ represents the probability of accurately predicting $y$ \textbf{\emph{as}} $z$. $\mathcal{P}\left( \mathcal{M} \right)$ represents the probability of accurately predicting $y$ \textbf{\emph{as a deterministic permutation or 1-to-1 function of}} $z$. $\mathcal{A}\left( \mathcal{M} \right)$ can be interpreted as the probability of accurately predicting $y$ \textbf{\emph{as any deterministic function of}} $z$, \textbf{\emph{1-to-1 or otherwise}}. Thus, even when the inequalities (\ref{eq:accuracies}) are strict, we may always find a deterministic function $f$ so that the classifier $\mathcal{M}_f$ with generative model $y \rightarrow \displaystyle \vx  \rightarrow z \rightarrow  f(z)$ satisfies $\mathcal{D}\left( \mathcal{M}_f \right) = \mathcal{A}\left( \mathcal{M} \right)$. To be specific,
$$f(z) = \argmax_{i \in \{1, \dots, q\}} \mathbb{P}\left(y=i \vert z \right).$$

As a result, determining the highest accuracy ($\mathcal{D}\left( \mathcal{M} \right)$) achievable by a classifier using $\displaystyle\vx$ to predict $y$ boils down to determining the highest possible value that $\mathcal{A}\left( \mathcal{M} \right)$ may take given the joint distribution $P_{y, \displaystyle\vx}$.

\begin{lemma}
\label{lem:1}
Let $y \sim P$ be a categorical random variable taking value in $\{1, \dots, q\}$, the $i$-th with probability $p_i$. The highest accuracy achievable when predicting $y$ solely from knowing $P$ is $$\mathcal{A}\left( P \right) := \underset{i \in [1, q]}{\max} ~ p_i,$$ and it is achieved by always predicting the most likely outcome.
\end{lemma}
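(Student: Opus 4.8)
The plan is to observe that a prediction formed \emph{solely from knowing} $P$ cannot depend on any realization of $y$, so the predicted value $z$ is stochastically independent of $y$; once this is granted, the statement reduces to a one-line optimization over the law of $z$. First I would set up the admissible predictor class, allowing even randomized rules: such a predictor is described by a probability vector $(q_1,\dots,q_q)$ on $\{1,\dots,q\}$, with $z$ drawn from this law independently of $y$ (a deterministic rule being the special case where some $q_i = 1$). Using independence, the accuracy is
\begin{align*}
\mathbb{P}(y=z) = \sum_{i=1}^q \mathbb{P}(y=i)\,\mathbb{P}(z=i) = \sum_{i=1}^q p_i q_i.
\end{align*}

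Next I would bound this expression. Since $q_i \ge 0$ and $\sum_{i=1}^q q_i = 1$,
\begin{align*}
\sum_{i=1}^q p_i q_i \le \Big(\max_{j\in[1,q]} p_j\Big)\sum_{i=1}^q q_i = \max_{j\in[1,q]} p_j,
\end{align*}
so no admissible predictor exceeds $\mathcal{A}(P) = \max_{j\in[1,q]} p_j$. For the matching attainment, I would pick any $i^\star \in \argmax_{j\in[1,q]} p_j$ and take the deterministic rule $z \equiv i^\star$, which yields $\mathbb{P}(y=z) = p_{i^\star} = \mathcal{A}(P)$. Hence the supremum over admissible predictors is attained and equals $\mathcal{A}(P)$, achieved by always predicting the most likely outcome.

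Finally, I would note that this is consistent with the quantity $\mathcal{A}(\mathcal{M})$ appearing in the discussion preceding the lemma: in the present setting there are no explanatory variables, so $z$ carries no information about $y$, i.e. $\mathbb{P}(y=i\mid z) = p_i$ for every value of $z$, whence $\mathcal{A}(\mathcal{M}) = E_z\big[\max_{i\in[1,q]} \mathbb{P}(y=i\mid z)\big] = \max_{i\in[1,q]} p_i$, and by the reduction $\mathcal{D}(\mathcal{M}) \le \mathcal{P}(\mathcal{M}) \le \mathcal{A}(\mathcal{M})$ established just above together with the attaining rule $f(z) = \argmax_i \mathbb{P}(y=i\mid z)$, the best achievable $\mathcal{D}(\mathcal{M})$ coincides with this value. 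There is no real obstacle here; the only point requiring care is the modeling step — arguing that ``predicting from $P$ alone'' legitimately forces $z$ to be independent of $y$, and that permitting randomization does not enlarge the achievable accuracy — and the displayed inequality settles the latter, since the objective $\sum_i p_i q_i$ is linear in $(q_i)$ and therefore maximized at a vertex of the probability simplex.
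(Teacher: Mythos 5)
Your proof is correct and follows essentially the same route as the paper's: model the predictor as a (possibly randomized) variable $z$ with pmf $(q_1,\dots,q_q)$ independent of $y$, compute the accuracy as $\sum_i p_i q_i$, and bound it by $\max_i p_i$ with attainment at the deterministic argmax rule. The additional remarks on consistency with $\mathcal{A}(\mathcal{M})$ are fine but not needed.
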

\begin{proof}
A strategy predicting $y$ solely from knowing $P$ can be represented as a random variable $z$ with the same support as $y$ but that is independent from $y$, and with pmf $q_1, \dots, q_q$. Its accuracy is simply the probability that both variables are equal, 
\begin{align*}
\mathbb{P}\left(y = z \right) &= \mathbb{P}\left( \cup_{i=1}^q \left(y=i ~\&~ z=i\right) \right) \\
&= \sum_{i=1}^q p_iq_i \\
& \leq \left(  \underset{i \in [1, q]}{\max} ~ p_i \right) \sum_{i=1}^q q_i \\
&= \underset{i \in [1, q]}{\max} ~ p_i,
\end{align*}
with equality if and only if $q_j = 0$ for all $j \neq   \underset{i \in [1, q]}{\argmax} ~ p_i$.
\end{proof}

\begin{lemma}
\label{lem:2}
Among all discrete probability distributions on $\{1, \dots, q\}$ satisfying $\mathcal{A}\left( P \right) = a$, the one with the highest entropy is the one whose $(q-1)$ least likely outcomes have the same probability $\frac{1-a}{q-1}$, and it has Shannon entropy $$\bar{h}_q(a) := -a\log a - (1-a) \log \frac{1-a}{q-1}, ~~ a \geq \frac{1}{q}.$$
\end{lemma}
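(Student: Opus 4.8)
The plan is to reduce the problem to a constrained entropy maximization over the $q-1$ ``small'' coordinates and then invoke concavity, with a feasibility check that is exactly where the hypothesis $a \ge 1/q$ enters.

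First I would use the permutation-invariance of the Shannon entropy together with the definition $\mathcal{A}(P) = \max_i p_i$ to assume, without loss of generality, that $p_1 = a$ and $p_i \le a$ for all $i \ge 2$; the leftover-mass constraint is then $\sum_{i=2}^q p_i = 1-a$. Splitting off the fixed term gives $h(P) = -a\log a - \sum_{i=2}^q p_i \log p_i$, so maximizing $h(P)$ over the admissible set is equivalent to maximizing $\sum_{i=2}^q \phi(p_i)$, where $\phi(t) := -t\log t$ is concave on $[0,\infty)$, subject to $\sum_{i=2}^q p_i = 1-a$ and $0 \le p_i \le a$.

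Next I would drop the upper-bound constraints temporarily and apply Jensen's inequality to $\phi$ over the $q-1$ values $p_2,\dots,p_q$: $\frac{1}{q-1}\sum_{i=2}^q \phi(p_i) \le \phi\!\left(\tfrac{1}{q-1}\sum_{i=2}^q p_i\right) = \phi\!\left(\tfrac{1-a}{q-1}\right)$, hence $\sum_{i=2}^q \phi(p_i) \le -(1-a)\log\tfrac{1-a}{q-1}$, with equality precisely when $p_2 = \dots = p_q = \tfrac{1-a}{q-1}$. (Equivalently, one may phrase this as Gibbs' inequality, i.e. non-negativity of the KL divergence between the renormalized tail distribution $(p_i/(1-a))_{i\ge 2}$ and the uniform distribution on $\{2,\dots,q\}$.) Adding back $-a\log a$ gives the candidate optimal value $\bar h_q(a)$.

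The only step that genuinely uses $a \ge 1/q$ is verifying that this Jensen-optimal profile is \emph{feasible} for the original problem, i.e. that $\tfrac{1-a}{q-1} \le a = p_1$; but $\tfrac{1-a}{q-1} \le a \iff 1 \le aq \iff a \ge \tfrac{1}{q}$. Thus the unconstrained tail-entropy optimum already satisfies the upper bounds $p_i \le a$ forced by $\mathcal{A}(P)=a$, so it is the optimum of the constrained problem as well — this is the main (and only mildly delicate) point. I would finish by observing that at this maximizer one has $\max_i p_i = a$, attained at $i=1$, so it really is a distribution with $\mathcal{A}(P)=a$, and by noting that $\bar h_q$ is well defined and this argument is valid for every $a \in [\tfrac1q, 1]$.
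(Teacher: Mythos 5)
Your proof is correct and follows essentially the same route as the paper's: fix the top probability at $a$ and apply Jensen's inequality to the remaining $q-1$ probabilities (the paper applies Jensen to $\log$ with weights $\pi_i/(1-\pi_1)$, you apply it to $t \mapsto -t\log t$ with uniform weights, which is the same Gibbs-type argument). Your explicit feasibility check that $\tfrac{1-a}{q-1} \le a$ precisely when $a \ge \tfrac{1}{q}$ — confirming the Jensen-optimal profile really does satisfy $\mathcal{A}(P)=a$ — is a small but worthwhile addition that the paper's proof leaves implicit.
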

\begin{proof}
Let us  denote $\pi_1, \dots, \pi_q$ the probabilities of $P$ sorted in decreasing order and let us assume $a=\pi_1$. The Shannon entropy of $P$ reads
\begin{align*}
H(P) &= -\pi_1 \log \pi_1 - \sum_{i=2}^q \pi_i \log \pi_i \\
&= -\pi_1 \log \pi_1 + (1-\pi_1) \sum_{i=2}^q \frac{\pi_i}{1-\pi_1} \log \frac{1}{\pi_i} \\
& \leq -\pi_1 \log \pi_1 + (1-\pi_1) \log \sum_{i=2}^q \frac{\pi_i}{1-\pi_1} \frac{1}{\pi_i} \\
& = -\pi_1 \log \pi_1 + (1-\pi_1) \log \frac{q-1}{1-\pi_1} \\
& = -\pi_1 \log \pi_1 - (q-1) \frac{1-\pi_1}{q-1} \log \frac{1-\pi_1}{q-1}
\end{align*}
where the inequality is a direct application of Jensen's inequality to the strictly concave $\log$ function, and the equality holds if and only if $\pi_i$ are the same for $ i \geq 2$ and equal to $\frac{1-\pi_1}{q-1}$.
\end{proof}
\begin{figure}[h]
\includegraphics[width=0.45\textwidth]{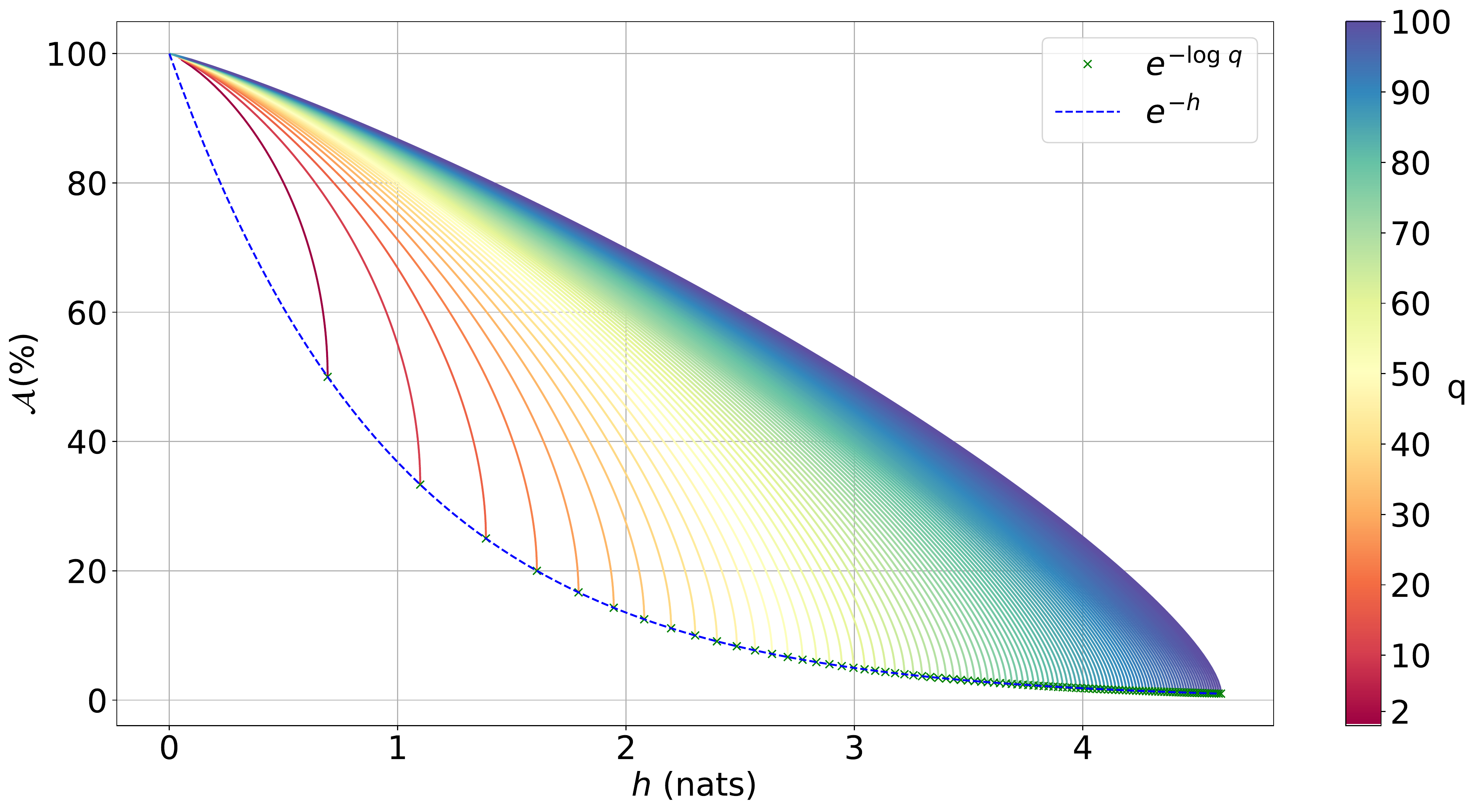}
\caption{Solid lines are plots of $h \to \bar{h}_q^{-1}(h)$ for various values of $q$.}
\label{fig:hqi}
\end{figure}

\begin{corollary}
\label{cor:1}
Among all discrete distributions taking $q$ distinct values and that have the same entropy, if there is one whose $(q-1)$ least likely outcomes have the same probability, then its highest outcome probability is the largest of them all.
\end{corollary}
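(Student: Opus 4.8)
The plan is to read this off immediately from Lemma~\ref{lem:2} once we record that $\bar{h}_q$ is strictly monotone. Fix the common entropy level $h$, and let $P^{\star}$ be a distribution on $\{1,\dots,q\}$ with $H(P^{\star})=h$ whose $q-1$ least likely outcomes share a common probability; write $a^{\star}:=\mathcal{A}(P^{\star})=\max_i P^{\star}(i)$. The equality clause of Lemma~\ref{lem:2} says precisely that such a $P^{\star}$ attains $H(P^{\star})=\bar{h}_q(a^{\star})$, so $\bar{h}_q(a^{\star})=h$. Now take any competitor: a distribution $P$ on $\{1,\dots,q\}$ with $H(P)=h$, and set $a:=\mathcal{A}(P)=\max_i P(i)$, which lies in $[1/q,1]$ since the $q$ probabilities sum to $1$. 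The inequality part of Lemma~\ref{lem:2} gives $h=H(P)\le\bar{h}_q(a)$, i.e. $\bar{h}_q(a^{\star})\le\bar{h}_q(a)$.

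The single step that needs verification is that $\bar{h}_q$ is strictly decreasing on $[1/q,1]$, so that the last inequality can be inverted. This is visible in Figure~\ref{fig:hqi}, and a one-line computation confirms it: $\bar{h}_q'(a)=\log\frac{1-a}{a(q-1)}$, which is negative exactly when $a>1/q$ (equivalently $aq>1$). Hence $\bar{h}_q$ is a strictly decreasing bijection of $[1/q,1]$ onto $[0,\log q]$, and $\bar{h}_q(a^{\star})\le\bar{h}_q(a)$ forces $a^{\star}\ge a$, i.e. $\mathcal{A}(P^{\star})\ge\mathcal{A}(P)$. Since $P$ was an arbitrary distribution on $q$ values with the prescribed entropy, $P^{\star}$ indeed has the largest maximal outcome probability, namely $\bar{h}_q^{-1}(h)$.

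The only obstacle here is the monotonicity of $\bar{h}_q$; everything else is a direct application of Lemma~\ref{lem:2}, so the result is a corollary in the literal sense. I would additionally note that, because the inequality in Lemma~\ref{lem:2} is an equality only when the $q-1$ least likely outcomes coincide, the bound $a\le a^{\star}$ is strict unless $P$ itself has that structure; thus $a^{\star}=\bar{h}_q^{-1}(h)$ is not merely the largest but the uniquely largest maximal probability, attained only by the ``flat-tail'' distribution (up to relabeling the $q$ outcomes).
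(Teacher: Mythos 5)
Your proof is correct and follows essentially the same route as the paper's: both arguments reduce the claim to Lemma~\ref{lem:2} together with the fact that $\bar{h}_q$ is decreasing on $[\frac{1}{q},1]$. Your version is slightly tidier in that you apply the inequality half of Lemma~\ref{lem:2} to the competitor directly rather than passing through an auxiliary flat-tail distribution $Q^\prime$, and you actually compute $\bar{h}_q'(a)=\log\frac{1-a}{a(q-1)}$ where the paper merely asserts monotonicity; your closing uniqueness remark is a valid strengthening not claimed in the paper.
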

\begin{proof}
Let $P$ and $Q$ be two distributions taking $q$ distinct values and that have the same entropy $H(P)=H(Q)$. Let's assume $P$'s $(q-1)$ least likely outcomes have the same probability. It follows from the previous lemma that the entropy of $Q$ is lower than the entropy of the distribution $Q^\prime$ having the same highest outcome probability $\mathcal{A}(Q)=  \mathcal{A}(Q^\prime)$ and whose $(q-1)$ least likely outcomes have the same probability (the lemma below justifies the existence of $P$ and $Q^\prime$): 
\begin{align*}
\bar{h}_q\left( \mathcal{A}(P)\right) 
= H(P) &= H(Q) \\
&< H(Q^\prime) =\bar{h}_q\left( \mathcal{A}(Q^\prime)\right) =\bar{h}_q\left( \mathcal{A}(Q)\right).
\end{align*} A study of the function $a \in [\frac{1}{q}, 1] \to \bar{h}_q(a)$ reveals that it is a decreasing function of $a$ for any $q$. Hence,  $\mathcal{A}(P) >  \mathcal{A}(Q)$.
\end{proof}

\begin{lemma}
\label{lem:3}
For any possible entropy value $h$ of a discrete distribution taking $q$ possible distinct values, there exists a discrete distribution whose entropy is $h$ and whose $(q-1)$ least likely outcomes have the same probability.
\end{lemma}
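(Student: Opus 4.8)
The plan is to realize every admissible entropy value by an explicit member of the one-parameter family of distributions already implicit in Lemma~\ref{lem:2}, and then invoke the intermediate value theorem. First I would fix, for each $a \in [\frac{1}{q}, 1]$, the distribution $P_a$ on $\{1,\dots,q\}$ that puts mass $a$ on one outcome and mass $\frac{1-a}{q-1}$ on each of the other $q-1$ outcomes. The condition $a \ge \frac{1}{q}$ is equivalent to $a \ge \frac{1-a}{q-1}$, so the mass $a$ is the largest one and the remaining $q-1$ masses --- which are then the least likely outcomes --- are all equal; hence $P_a$ has exactly the shape the lemma requires. By the entropy computation carried out in the proof of Lemma~\ref{lem:2}, $H(P_a) = \bar{h}_q(a)$.

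Next I would identify the range of entropies that a $q$-atom distribution can have: it is contained in $[0,\log q]$, since entropy is non-negative and is bounded above by $\log q$ (attained at the uniform distribution). It then remains to check that $a \mapsto \bar{h}_q(a)$ sweeps out all of $[0,\log q]$ as $a$ runs over $[\frac{1}{q},1]$. This follows from continuity of $\bar{h}_q$ on $[\frac{1}{q},1]$ together with the endpoint values $\bar{h}_q(\frac{1}{q})=\log q$ and $\bar{h}_q(1)=0$ (using the convention $0\log 0 = 0$); indeed, the analysis of $\bar{h}_q$ already invoked in Corollary~\ref{cor:1} shows that $\bar{h}_q$ is strictly decreasing, so it is a bijection from $[\frac{1}{q},1]$ onto $[0,\log q]$. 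Given any admissible entropy value $h$, I would then set $a = \bar{h}_q^{-1}(h)$ and output $P_a$, which by construction has entropy $h$ and whose $q-1$ least likely outcomes are equiprobable.

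I expect no real obstacle here: the only points requiring care are that the admissible entropy range sits inside the image of $\bar{h}_q$ (so that no value of $h$ is missed) and that the domain restriction $a \ge \frac{1}{q}$ is precisely what keeps the equiprobable block among the \emph{least} likely outcomes rather than its being the heavy atom --- both are immediate from the definitions. In effect the lemma is a repackaging of the continuity and monotonicity of $\bar{h}_q$ that the surrounding results already use, and it is exactly the fact that makes the notation $\bar{h}_q^{-1}$ legitimate and supplies the missing existence statement needed in Corollary~\ref{cor:1}.
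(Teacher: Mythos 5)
Your proof is correct and follows essentially the same route as the paper's: both arguments identify the admissible entropy range as $[0,\log q]$ and then use the monotonicity/invertibility of $\bar{h}_q$ on $[\frac{1}{q},1]$ with image $[0,\log q]$ to produce the required distribution. Your version is slightly more explicit in constructing the family $P_a$ and in checking that the constraint $a \ge \frac{1}{q}$ keeps the equiprobable block among the least likely outcomes, but the substance is identical.
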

\begin{proof}
Let $h$ be the entropy of a discrete distribution on a set of size $q$. We have $h \in [0, \log q]$, as Shannon's entropy is non-negative, and the uniform distribution is maximum-entropy among all discrete distributions on a set with cardinality $q$ and it has entropy $\log q$. The probability $a$ of the most likely outcome ought to satisfy $a \geq 1/q$, otherwise all probabilities would sum to less than $1$. A study of the function $a \in [\frac{1}{q}, 1] \to \bar{h}_q(a)$ using the convention $0\log 0 = 0$ reveals that it is  differentiable, decreasing, concave, and invertible  on $[\frac{1}{q}, 1]$ and the image of $[\frac{1}{q}, 1]$ is $[0, \log q]$.
\end{proof}
\begin{theorem}
\label{theo:class_1}
Let $y$ be a categorical random variable taking up to $q$ distinct values, that has entropy $h$, but whose distribution we do not know. The highest accuracy we can achieve when predicting $y$ reads
\begin{align}
\mathcal{A}(h; q) := \bar{h}^{-1}_q(h),
\end{align}
where $h \to \bar{h}^{-1}_q(h)$  is the inverse of the function $a \in [\frac{1}{q}, 1] \to \bar{h}_q(a)$, as illustrated in Figure \ref{fig:hqi}.
\end{theorem}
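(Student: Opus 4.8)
The plan is to obtain Theorem~\ref{theo:class_1} as an essentially immediate consequence of Lemmas~\ref{lem:1}--\ref{lem:3} and Corollary~\ref{cor:1}, by first reducing ``accuracy'' to ``mode probability'' and then invoking the entropy-versus-mode trade-off. First I would invoke the ``Intuition'' reduction together with Lemma~\ref{lem:1}: when the distribution $P$ of $y$ on $\{1,\dots,q\}$ is known, the highest accuracy attainable by any predictor of $y$ is exactly $\mathcal{A}(P)=\max_i p_i$, realized by always announcing the mode of $P$. Consequently, the ``highest accuracy we can achieve'' given only that $H(y)=h$ (and that $y$ takes at most $q$ values) is the supremum of $\mathcal{A}(P)$ over all distributions $P$ on $\{1,\dots,q\}$ with $H(P)=h$ --- a quantity that is simultaneously a valid upper bound for every admissible $P$ and, I will show, attained by a particular $P$.

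The second step is the upper bound $\mathcal{A}(P)\le \bar{h}^{-1}_q(h)$ for every admissible $P$. Writing $a:=\mathcal{A}(P)=\max_i p_i\in[\frac{1}{q},1]$, Lemma~\ref{lem:2} gives $H(P)\le \bar{h}_q(a)$, i.e. $h\le \bar{h}_q(a)$. By Lemma~\ref{lem:3} the map $a\in[\frac{1}{q},1]\mapsto \bar{h}_q(a)$ is a differentiable, strictly decreasing bijection onto $[0,\log q]$, so $\bar{h}^{-1}_q$ is well defined and decreasing on $[0,\log q]$; applying it to $h\le \bar{h}_q(a)$ reverses the inequality and yields $a\le \bar{h}^{-1}_q(h)$.

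The third step is attainment. Set $a^\star:=\bar{h}^{-1}_q(h)\in[\frac{1}{q},1]$. By Lemma~\ref{lem:3} there exists a distribution $P^\star$ on $\{1,\dots,q\}$ with $H(P^\star)=h$ whose $q-1$ least likely outcomes carry a common probability; by the extremal structure identified in Lemma~\ref{lem:2} (equivalently Corollary~\ref{cor:1}) such a $P^\star$ has the largest mode among all distributions of entropy $h$, and since its entropy equals $\bar{h}_q$ evaluated at its mode, that mode is exactly $a^\star$. Hence $\mathcal{A}(P^\star)=\bar{h}^{-1}_q(h)$, and by Step~1 the ``predict the mode of $P^\star$'' strategy achieves this accuracy. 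Combining Steps~2 and~3 gives that the highest achievable accuracy is $\bar{h}^{-1}_q(h)=\mathcal{A}(h;q)$.

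I do not expect a genuine obstacle here: the content is entirely carried by the preceding lemmas, and the only points requiring care are (i) confirming, via Lemma~\ref{lem:3}, that $\bar{h}_q$ is a true bijection on $[\frac{1}{q},1]$ so that the ``inverse'' in the statement is meaningful on the whole range $[0,\log q]$, and (ii) the bookkeeping about support size --- a variable taking ``up to $q$'' values is just a distribution on $\{1,\dots,q\}$ with some masses possibly zero, and under the convention $0\log 0=0$ both Lemma~\ref{lem:2} and Lemma~\ref{lem:3} apply verbatim, so no separate case analysis is needed.
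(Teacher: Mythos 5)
Your proposal is correct and follows essentially the same route as the paper, whose proof of this theorem is literally the one-line citation ``This follows from Lemma~\ref{lem:2}, Corollary~\ref{cor:1} and Lemma~\ref{lem:3}''; you have simply made explicit the reduction to mode probability (Lemma~\ref{lem:1}), the upper bound via the monotone invertibility of $\bar{h}_q$, and the attainment via the equal-tail distribution. No gaps.
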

\begin{proof}
This follows from Lemma \ref{lem:2}, Corollary \ref{cor:1} and Lemma \ref{lem:3}
\end{proof}

\begin{definition}
The accuracy of a classifier $\mathcal{M}$ predicting that the label $y$ associated to explanatory variables $\displaystyle\vx$ is $z$ is defined as
$$\mathcal{D}\left( \mathcal{M}\right) := \mathbb{P}(y=z).$$
\end{definition}

We may now state our main result expressing the highest accuracy achievable by a classifier as a function of the Shannon entropy of the label $y$ and the mutual information $I\left( y; \displaystyle \vx\right)$ between the label and explanatory variables. The idea behind the proof is to note that the highest possible $\mathcal{D}\left( \mathcal{M}\right)$ is the same as the highest possible $\mathcal{A}\left( \mathcal{M} \right) = E_{z} \left[ \mathcal{A}\left(P_{y \vert z} \right) \right]$ and to use previously established results to conclude.

\begin{theorem}
The highest accuracy $\bar{\mathcal{A}}(P_{y, \displaystyle \vx})$ achievable by a classifier using $\displaystyle \vx$ to predict a categorical random variable $y \in \{1, \dots, q\}$ satisfies the strong Fano inequality
\begin{align}
\label{eq:opt_class}
\bar{\mathcal{A}}(P_{y, \displaystyle \vx}) \leq \bar{h}^{-1}_q\left( h(y)  - I\left( y; \displaystyle \vx\right) \right).
\end{align}
Additionally,
\begin{align}
\label{eq:opt_class}
\bar{\mathcal{A}}(P_{y, \displaystyle \vx}) = \bar{h}^{-1}_q\left( h(y)  - I\left( y; \displaystyle \vx\right) \right).
\end{align}
and the \emph{oracle} classifier $\mathcal{M}^\infty$ achieves $\bar{\mathcal{A}}(P_{y, \displaystyle \vx})$ when the entropy of the conditional distribution, namely $h\left( y \vert \displaystyle \vx = * \right)$, is the same for all values $*$ of $\displaystyle \vx$ (i.e. $\displaystyle \vx$ is no more informative about $y$ in certain parts of the domain $\mathcal{X}$ than others), and when $q=2$ or the $(q-1)$ least likely outcomes under the conditional distribution $P_{y \vert \displaystyle \vx}$ are always equally likely (i.e. the information in $\displaystyle \vx$ about $y$ leaves no room for a clear runner-up).
\end{theorem}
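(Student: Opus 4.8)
The plan is to prove the inequality for an \emph{arbitrary} classifier and then read off precisely when each link in the chain is an equality. First I would fix a classifier $\mathcal{M}$ with generative graphical model $y \rightarrow \vx \rightarrow z$. By inequalities~(\ref{eq:accuracies}) and the remark following them, its accuracy obeys $\mathcal{D}(\mathcal{M}) \leq \mathcal{A}(\mathcal{M}) = E_z\!\left[\mathcal{A}\!\left(P_{y\vert z}\right)\right]$, and the value $\mathcal{A}(\mathcal{M})$ is in fact attained by post-composing $z$ with the Bayes decision $f(z) = \argmax_i \mathbb{P}\!\left(y=i \vert z\right)$, so it suffices to control $E_z\!\left[\mathcal{A}\!\left(P_{y\vert z}\right)\right]$. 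For each value $c$ of $z$, the conditional law $P_{y\vert z=c}$ is supported on at most $q$ atoms, so Lemma \ref{lem:2} gives $h\!\left(P_{y\vert z=c}\right) \leq \bar{h}_q\!\left(\mathcal{A}\!\left(P_{y\vert z=c}\right)\right)$; since $a\mapsto\bar{h}_q(a)$ is decreasing on $[\tfrac1q,1]$ (recorded in the proof of Lemma \ref{lem:3}), applying $\bar{h}_q^{-1}$ yields the pointwise bound $\mathcal{A}\!\left(P_{y\vert z=c}\right) \leq \bar{h}_q^{-1}\!\left(h\!\left(P_{y\vert z=c}\right)\right)$.

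Next I would average over $z$ and pull the expectation through $\bar{h}_q^{-1}$. The proof of Lemma \ref{lem:3} records that $\bar{h}_q$ is decreasing and concave on $[\tfrac1q,1]$ with image $[0,\log q]$; the standard inverse-function computation (differentiating $(\bar{h}_q^{-1})' = 1/\bar{h}_q'(\bar{h}_q^{-1})$ once more, whose numerator carries the sign of $-\bar{h}_q'' \leq 0$) then shows $\bar{h}_q^{-1}$ is itself decreasing and concave on $[0,\log q]$. Hence, by Jensen's inequality,
\begin{align*}
E_z\!\left[\mathcal{A}\!\left(P_{y\vert z}\right)\right] \;\leq\; E_z\!\left[\bar{h}_q^{-1}\!\left(h\!\left(P_{y\vert z}\right)\right)\right] \;\leq\; \bar{h}_q^{-1}\!\left(E_z\!\left[h\!\left(P_{y\vert z}\right)\right]\right) \;=\; \bar{h}_q^{-1}\!\left(h(y\vert z)\right).
\end{align*}
Finally, the data processing inequality — used exactly as in the proof of Proposition \ref{prop:best} — gives $I(y;z) \leq I(y;\vx)$, hence $h(y\vert z) \geq h(y) - I(y;\vx)$, and monotonicity of $\bar{h}_q^{-1}$ delivers $\mathcal{D}(\mathcal{M}) \leq \bar{h}_q^{-1}\!\left(h(y) - I(y;\vx)\right)$. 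Taking the supremum over $\mathcal{M}$ proves the strong Fano inequality.

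For the equality claim I would rerun the same chain with $\mathcal{M} = \mathcal{M}^\infty$ and check which hypotheses force each inequality to be tight. The step $\mathcal{D}=\mathcal{A}$ is always achievable through the Bayes rule. Since $P_{y\vert z^\infty}=P_{y\vert\vx}$, the pointwise step is an equality for a.e. $\vx$ exactly under the equality case of Lemma \ref{lem:2}, namely that the $(q-1)$ least likely outcomes of $P_{y\vert\vx}$ are equiprobable — which is vacuous when $q=2$ — i.e. the `no clear runner-up' condition. The Jensen step is an equality when $\vx\mapsto h\!\left(P_{y\vert\vx}\right)$ is a.s. constant, i.e. the uniform-informativeness condition $h(y\vert\vx=*)\equiv\text{const}$. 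The data-processing step is an equality for $\mathcal{M}^\infty$ by construction ($I(y;z^\infty)=I(y;\vx)$, as in the proof of Proposition \ref{prop:best}). Under the two stated hypotheses all links collapse, so $\mathcal{M}^\infty$ attains $\bar{h}_q^{-1}\!\left(h(y)-I(y;\vx)\right)$, which is therefore equal to $\bar{\mathcal{A}}(P_{y,\vx})$.

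The step I expect to be the main obstacle is the very first one: pinning down rigorously, within the graphical-model formalism, that ``highest accuracy achievable'' really coincides with $\sup_{\mathcal{M}} E_z[\mathcal{A}(P_{y\vert z})]$ and that the oracle realizes this supremum, together with stating the equality conditions in their sharpest honest form — strictly the Jensen step is also tight whenever $\bar{h}_q^{-1}$ happens to be affine on the range of $h(P_{y\vert\vx})$, but away from such degenerate cases constancy of $h(y\vert\vx=*)$ is the operative sufficient condition, which is why the theorem states it as sufficient rather than necessary.
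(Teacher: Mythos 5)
Your proof is correct and follows essentially the same route as the paper's: the chain $\mathcal{D}(\mathcal{M}) \leq \mathcal{A}(\mathcal{M}) = E_z\left[\mathcal{A}\left(P_{y\vert z}\right)\right]$ realized by the Bayes rule, the pointwise bound $\mathcal{A}\left(P_{y\vert z=c}\right) \leq \bar{h}_q^{-1}\left(h(y\vert z=c)\right)$ (which you obtain directly from Lemma~\ref{lem:2} plus monotonicity of $\bar{h}_q$ rather than by citing Corollary~\ref{cor:1}, but it is the same fact), Jensen's inequality via concavity of $\bar{h}_q^{-1}$, and the data processing inequality, with the same three equality conditions read off link by link. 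Your explicit check that $\bar{h}_q^{-1}$ inherits concavity from the decreasing, concave $\bar{h}_q$ is a detail the paper asserts without justification, so that is a small net gain rather than a divergence.
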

\begin{proof}
As argued in the paper, the highest possible value achievable by $\mathcal{D}\left( \mathcal{M}\right)$ is the highest possible value achievable by $\mathcal{A}\left( \mathcal{M}\right)$, so that we may focus on the latter.

Let $\mathcal{M}$ be the classifier with generative graphical model $y \rightarrow \displaystyle \vx \rightarrow z$ and predictive distribution $P_{y \vert z}$.  
It follows from Corollary \ref{cor:1} that
$$\mathcal{A}\left(P_{y \vert z} \right) := \underset{i \in [1, q]}{\max} ~ \mathbb{P}\left( y = i \vert  z \right) \leq \bar{h}^{-1}_q\left( h(y \vert z=*) \right).$$

Taking the expectation  with respect to $z$, we get
\begin{align*}
\mathcal{A}\left( \mathcal{M} \right) &\leq  E_{z} \left[ \bar{h}^{-1}_q\left( h(y \vert z=*) \right) \right] \\
&\leq  \bar{h}^{-1}_q\left( E_{z} \left[  h(y \vert z=*) \right] \right) \\
&= \bar{h}^{-1}_q\left( h(y \vert z) \right),
\end{align*}
where the second inequality stems from the concavity of $\bar{h}^{-1}_q$. It follows from the data processing inequality, namely $$h(y) - h(y \vert \displaystyle \vx) = I\left( y; \displaystyle \vx \right)  \geq I\left(y; z \right) = h(y) - h(y \vert z),$$ that $h(y \vert z) \geq  h(y \vert \displaystyle \vx),$
which implies $\bar{h}^{-1}_q\left( h(y \vert z) \right) \leq \bar{h}^{-1}_q\left( h(y \vert \displaystyle \vx) \right)$ as $\bar{h}^{-1}_q$ is decreasing. Using $h(y \vert \displaystyle \vx)= h(y)  - I\left(y; \displaystyle \vx\right)$ we get $$\mathcal{A}\left( \mathcal{M} \right) \leq  \bar{h}^{-1}_q\left( h(y)  - I\left( y; \displaystyle \vx\right) \right).$$
By definition of $\mathcal{M}^\infty$, the data processing inequality is an equality when $\mathcal{M} = \mathcal{M}^\infty$. By strict concavity of $\bar{h}^{-1}_q$ the Jensen inequality we used is an equality for $\mathcal{M}^\infty$  if and only if $h(y \vert z^\infty=*)$, and therefore $h(y \vert \displaystyle \vx=*)$, is the same for every observed value of $z^\infty$ and $\displaystyle \vx$. As for the application of Corollary \ref{cor:1}, the inequality is an equality when the $(q-1)$ least likely outcomes of $P_{y \vert z^\infty} = P_{y \vert \displaystyle \vx}$  are equally probable.
\end{proof}

\begin{table*}[!ht]
\centering
\resizebox{\textwidth}{!}{
\begin{tabular}{llrrllll}
\toprule
                       Dataset &   Problem Type &      n &   d & Number of Classes &     RMSE & R-Squared & Classification Accuracy \\
\midrule
       Skin Segmentation (UCI) & classification & 245057 &   3 &                 2 &        - &      0.64 &                    1.00 \\
               Bank Note (UCI) & classification &   1372 &   4 &                 2 &        - &      0.75 &                    1.00 \\
        Water Quality (Kaggle) & classification &   3276 &   9 &                 2 &        - &      0.04 &                    0.65 \\
                 Shuttle (UCI) & classification &  58000 &   9 &                 7 &        - &      0.74 &                    1.00 \\
             Magic Gamma (UCI) & classification &  19020 &  10 &                 2 &        - &      0.67 &                    0.98 \\
                   Avila (UCI) & classification &  20867 &  10 &                12 &        - &      0.97 &                    1.00 \\
              Titanic (Kaggle) & classification &    891 &  11 &                 2 &        - &      0.48 &                    0.89 \\
         Heart Attack (Kaggle) & classification &    303 &  13 &                 2 &        - &      0.64 &                    0.95 \\
        Heart Disease (Kaggle) & classification &    303 &  13 &                 2 &        - &      0.64 &                    0.95 \\
           EEG Eye State (UCI) & classification &  14980 &  14 &                 2 &        - &      0.75 &                    1.00 \\
                   Adult (UCI) & classification &  48843 &  14 &                 3 &        - &      0.67 &                    1.00 \\
      Letter Recognition (UCI) & classification &  20000 &  16 &                26 &        - &      1.00 &                    1.00 \\
    Diabetic Retinopathy (UCI) & classification &   1151 &  19 &                 2 &        - &      0.53 &                    0.90 \\
          Bank Marketing (UCI) & classification &  41188 &  20 &                 2 &        - &      0.72 &                    1.00 \\
            Card Default (UCI) & classification &  30000 &  23 &                 2 &        - &      0.75 &                    1.00 \\
                 Landsat (UCI) & classification &   6435 &  36 &                 6 &        - &      0.97 &                    1.00 \\
       Sensor Less Drive (UCI) & classification &  58509 &  48 &                11 &        - &      0.99 &                    1.00 \\
             APS Failure (UCI) & classification &  76000 & 170 &                 2 &        - &      0.75 &                    1.00 \\
             Power Plant (UCI) &     regression &   9568 &   4 &                 - &     4.31 &      0.94 &                       - \\
                Air Foil (UCI) &     regression &   1503 &   5 &                 - &     3.09 &      0.80 &                       - \\
     Yacht Hydrodynamics (UCI) &     regression &    308 &   6 &                 - &     1.46 &      0.99 &                       - \\
             Real Estate (UCI) &     regression &    414 &   6 &                 - &     6.05 &      0.80 &                       - \\
       Energy Efficiency (UCI) &     regression &    768 &   8 &                 - &     1.35 &      0.98 &                       - \\
                Concrete (UCI) &     regression &   1030 &   8 &                 - & 5.35e-01 &      1.00 &                       - \\
                 Abalone (UCI) &     regression &   4177 &   8 &                 - & 2.50e-02 &      1.00 &                       - \\
      White Wine Quality (UCI) &     regression &   4898 &  11 &                 - & 7.27e-01 &      0.33 &                       - \\
             Air Quality (UCI) &     regression &   8991 &  14 &                 - & 1.14e-03 &      1.00 &                       - \\
        Naval Propulsion (UCI) &     regression &  11934 &  16 &                 - & 1.47e-02 &      0.01 &                       - \\
            Bike Sharing (UCI) &     regression &  17379 &  18 &                 - &     3.98 &      1.00 &                       - \\
               Parkinson (UCI) &     regression &   5875 &  20 &                 - & 7.60e-13 &      1.00 &                       - \\
       Facebook Comments (UCI) &     regression & 209074 &  53 &                 - &     6.46 &      0.98 &                       - \\
             Online News (UCI) &     regression &  39644 &  58 &                 - &      219 &      1.00 &                       - \\
       Social Media Buzz (UCI) &     regression & 583250 &  77 &                 - &     5.14 &      1.00 &                       - \\
House Prices Advanced (Kaggle) &     regression &   1460 &  79 &                 - &    1,747 &      1.00 &                       - \\
       Superconductivity (UCI) &     regression &  21263 &  81 &                 - & 2.72e-05 &      1.00 &                       - \\
     Year Prediction MSD (UCI) &     regression & 515345 &  90 &                 - &     1.94 &      0.97 &                       - \\
           Blog Feedback (UCI) &     regression &  60021 & 280 &                 - &     1.01 &      0.89 &                       - \\
               CT Slices (UCI) &     regression &  53500 & 385 &                 - & 3.61e-01 &      1.00 &                       - \\
\bottomrule
\end{tabular}}
\caption{Highest performances achievable in 38 of the most popular UCI and Kaggle classification and regression problems.  In the Kaggle House Prices Advanced Regression experiment,  we used as target house prices, not their logarithms. }
\label{tab:exhaustive}
\end{table*}

\end{document}